\DeclareMathOperator*{\argmin}{argmin}
\newcommand{\R}{\mathbb{R}}
\newtheorem{theorem}{Theorem}
\newtheorem{definition}{Definition}
\newtheorem{corollary}{Corollary}
\newcommand{\decision}{\mathbf{x}}
\newcommand{\newdecision}{\mathbf{y}}
\newcommand{\feature}{\xi}
\newcommand{\model}{\Phi}
\newcommand{\weight}{w}
\newcommand{\parametrization}{P}
\newcommand{\behavior}{\theta}
\newcommand{\norm}[1]{\left\lVert#1\right\rVert}
\title{Automatically Learning Compact Quality-aware Surrogates for Optimization Problems}
\author{%
  Kai Wang \\
  Harvard University\\
  Cambridge, MA\\
  \texttt{kaiwang@g.harvard.edu} \\
  \And
  Bryan Wilder \\
  Harvard University \\
  Cambridge, MA\\
  \texttt{bwilder@g.harvard.edu} \\
  \AND
  Andrew Perrault \\
  Harvard University \\
  Cambridge, MA\\
  \texttt{aperrault@g.harvard.edu} \\
  \And
  Milind Tambe \\
  Harvard University \\
  Cambridge, MA\\
  \texttt{milind\_tambe@harvard.edu} \\
}
\begin{document}

\maketitle

\begin{abstract}
Solving optimization problems with unknown parameters often requires learning a predictive model to predict the values of the unknown parameters and then solving the problem using these values. Recent work has shown that including the optimization problem as a layer in the model training pipeline results in predictions of the unobserved parameters that lead to higher decision quality. Unfortunately, this process comes at a large computational cost because the optimization problem must be solved and differentiated through in each training iteration; furthermore, it may also sometimes fail to improve solution quality due to  non-smoothness issues that arise when training through a complex optimization layer. To address these shortcomings, we learn a low-dimensional surrogate model of a large optimization problem by representing the feasible space in terms of meta-variables, each of which is a linear combination of the original variables. By training a low-dimensional surrogate model end-to-end, and jointly with the predictive model, we achieve: i) a large reduction in training and inference time; and ii) improved performance by focusing attention on the more important variables in the optimization and learning in a smoother space. Empirically, we demonstrate these improvements on a non-convex adversary modeling task, a submodular recommendation task and a convex portfolio optimization task.

\end{abstract}

\section{Introduction}
Uncertainty is a common feature of many real-world decision-making problems because critical data may not be available when a decision must be made. Here is a set of representative examples: recommender systems with missing user-item ratings~\cite{isinkaye2015recommendation}, portfolio optimization where future performance is uncertain~\cite{markowitz2000mean}, and strategic decision-making in the face of an adversary with uncertain objectives~\cite{kar2017cloudy}.
Often, the decision-maker has access to features that provide information about the values of interest. In these settings, a \emph{predict-then-optimize}~\cite{elmachtoub2017smart} approach naturally arises, where we learn a model that maps from the features to a value for each parameter and optimize using this point estimate~\cite{zhou2013two}.
In principle, any predictive modeling approach and any optimization approach can be applied, but using a generic loss function to train the model may result in poor decision performance.
For example, a typical ratings prediction approach in recommendation system may equally weight errors across different items, but in the recommendation task, misclassifying a trendy item can result in more revenue loss than misclassifying an ordinary item.
We may instead want to train our model using a ``task-based'' or ``decision-focused'' loss, approximating the decision quality induced by the predictive model, which can be done by embedding the optimization problem as a layer in the training pipeline. This end-to-end approach improves performance on a variety of tasks ~\cite{bengio2013representation,wilder2019melding,donti2017task}.

Unfortunately, this end-to-end approach suffers from poor scalability because the optimization problem must be solved and differentiated through on every training iteration. Furthermore, the output of the optimization layer may not be smooth, sometimes leading to instabilities in training and consequently poor solution quality.
We address these shortcomings that arise in the end-to-end approach due to the presence of a complex optimization layer by
replacing it with a simpler surrogate problem. The surrogate problem is learned from the data by automatically finding a reparameterization of the feasible space in terms of meta-variables, each of which is a linear combination of the original decision variables.
The new surrogate problem is generally cheaper to solve due to the smaller number of meta-variables, but it can be lossy---the optimal solution to the surrogate problem may not match the optimal solution to the original.
Since we can differentiate through the surrogate layer, we can optimize the choice of surrogate together with predictive model training to minimize this loss. The dimensionality reduction offered by a compact surrogate simultaneously reduces training times, helps avoid overfitting, and sometimes smooths away bad local minima in the training landscape.

In short, we make several contributions. First, we propose a linear reparameterization scheme for general optimization layers. Second, we provide theoretical analysis of this framework along several dimensions: (i) we show that desirable properties of the optimization problem (convexity, submodularity) are retained under reparameterization; (ii) we precisely characterize the tractability of the end-to-end loss function induced by the reparameterized layer, showing that it satisfies a form of coordinate-wise quasiconvexity; and (iii) we provide sample complexity bounds for learning a model which minimizes this loss. Finally, we demonstrate empirically on a set of three diverse domains that our approach offers significant advantages in both training time and decision quality compared previous approaches to embedding optimization in learning.

\paragraph{Related work}

Surrogate models~\cite{forrester2009recent,queipo2005surrogate,lange2000optimization} are a classic technique in optimization, particularly for black-box problems. Previous work has explored linear reparameterizations to map between low and high fidelity models of a physical system \cite{bandler1994space,robinson2008surrogate,amsallem2015design} (e.g., for aerospace design problems). However, both the motivation and underlying techniques differ crucially from our work: previous work has focused on designing surrogates by hand in a domain-specific sense, while we leverage differentiation through the optimization problem to automatically produce a surrogate that maximizes overall decision quality.

Our work is closest to the recent literature on differentiable optimization. Amos et al.~\cite{amos2017optnet} and Agrawal et al.~\cite{agrawal2019differentiable} introduced differentiable quadratic programming and convex programming layers, respectively, by differentiating through the KKT conditions of the optimization problem. Donti et al.~\cite{donti2017task} and Wilder et al.~\cite{wilder2019melding} apply this technique to achieve end-to-end learning in convex and discrete combinatorial programming, respectively.
Perrault et al.~\cite{perraultend} applied the technique to game theory with a non-convex problem, where a sampling approach was proposed by Wang et al.~\cite{wang2020scalable} to improve the scalability of the backward pass.
All the above methods share scalability and non-smoothness issues: each training iteration requires solving the entire optimization problem and differentiating through the resulting KKT conditions, which requires $O(n^3)$ time in the number of decision variables and may create a non-smooth objective. Our surrogate approach aims to rectify both of these issues.

\section{Problem Statement}
We consider an optimization problem of the form: $\min\nolimits_{\decision \text{ feasible}} f(\decision, \behavior_{\text{true}})$.
The objective function depends on a parameter $\behavior_{\text{true}} \in \Theta$. If $\behavior_{\text{true}}$ were known, we assume that we could solve the optimization problem using standard methods. 
In this paper, we consider the case that parameter $\behavior_{\text{true}}$ is unknown and must be inferred from the given available features $\feature$.
We assume that $\feature$ and $\behavior_{\text{true}}$ are correlated and drawn from a joint distribution $\mathcal{D}$, and our data consists of samples from $\mathcal{D}$. Our task is to select the optimal decision $\decision^*(\feature)$, function of the available feature, to optimize the expected objective value:
\begin{align}\label{eqn:expected_objective}
\min\nolimits_{\decision^* \text{ feasible}} E_{(\feature, \behavior_{\text{true}}) \sim \mathcal{D}} [f\left(\decision^*(\feature), \behavior_{\text{true}} \right)]
\end{align}

In this paper, we focus on a \emph{predict-then-optimize}~\cite{elmachtoub2017smart,el2019generalization} framework, which proceeds by learning a model $\model(\cdot, \weight)$, mapping from the features $\feature$ to the missing parameter $\behavior_{\text{true}}$. When $\feature$ is given, we first infer $\behavior = \model(\feature, \weight)$ and then solve the resulting optimization problem to get the optimal solution $\decision^*$:
\begin{align}\label{eqn:optimization}
    \min\nolimits_\decision \quad & f(\decision, \behavior), \quad
    \text{s.t.} \quad h(\decision) \leq 0, \quad A \decision = b 
\end{align}
This reduces the decision-making problem with unknown parameters to a predictive modeling problem: how to learn a model $\model(\cdot, \weight)$ that leads to the best performance.

A standard approach to solve the predict-then-optimize problem is \emph{two-stage} learning, which trains the predictive model without knowledge of the decision-making task (Figure~\ref{fig:two-stage}). The predictive model minimizes the mismatch between the predicted parameters and the ground truth: $E_{(\feature, \behavior_{\text{true}}) \in \mathcal{D}} \ell(\model(\feature, \weight), \behavior_{\text{true}})$, with any loss metric $\ell$. Such two-stage approach is very efficient, but it may lead to poor performance when a standard loss function is used. Performance can be improved if the loss function is carefully chosen to suit the task~\cite{elkan2001foundations}, but doing so is challenging for an arbitrary optimization problem.

Gradient-based end-to-end learning approaches in domains with optimization layers involved, e.g., decision-focused learning~\cite{wilder2019melding,donti2017task}, 
directly minimize Equation~\ref{eqn:expected_objective} as the training objective, which requires back-propagating through the optimization layer in Equation~\ref{eqn:optimization}.
This end-to-end approach is able to achieve better solution quality compared to two-stage learning, in principle.
However, because the decision-focused approach has to repeatedly solve the optimization program and back-propagate through it, scalability becomes a serious issue. Additionally, the complex optimization layer can also jeopardize the smoothness of objective value, which is detrimental for training parameters of a neural network-based predictive model  with gradient-based methods.

\begin{figure*}[t]
    \centering
    \includegraphics[width=0.8\linewidth]{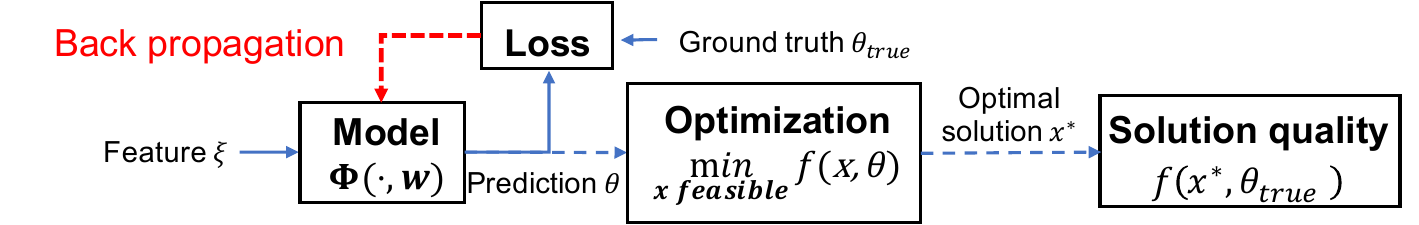}
    \caption{Two-stage learning back-propagates from the loss to the model, ignoring the latter effect of the optimization layer.}
    \label{fig:two-stage}
    \centering
    \includegraphics[width=0.8\linewidth]{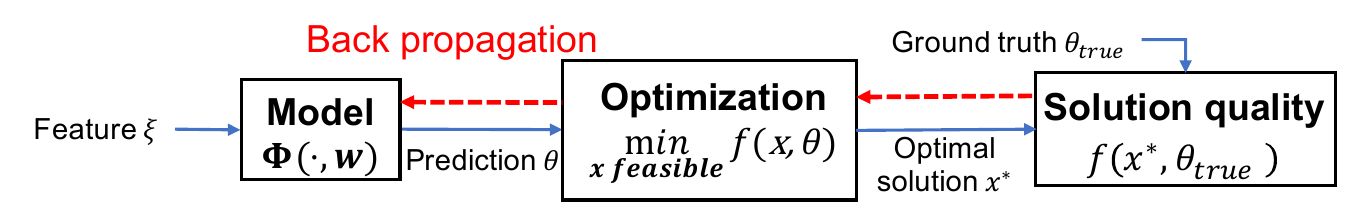}
    \caption{End-to-end decision-focused learning back-propagates from the solution quality through the optimization layer to the model we aim to learn.}
    \label{fig:decision-focused}
\end{figure*}

\section{Surrogate Learning}\label{sec:reparameterization}
The main idea of the surrogate approach is to replace Equation~\ref{eqn:optimization} with a carefully selected surrogate problem.
To simplify Equation~\ref{eqn:optimization}, we can linearly reparameterize $\decision = \parametrization \newdecision$, where $y \in \R^m$ with $m \ll n$ and $\parametrization \in \R^{n \times m}$,
\begin{align}\label{eqn:reparameterized_optimization}
    \min\nolimits_\newdecision \quad g_\parametrization(\newdecision, \behavior) \coloneqq f(\parametrization \newdecision, \behavior) \quad \text{s.t.} \quad & h(\parametrization \newdecision) \leq 0, \quad A  \parametrization \newdecision = b
\end{align}
Since this reparameterization preserves all the equality and inequality constraints in Equation~\ref{eqn:optimization}, we can easily transform a feasible low-dimensional solution $\newdecision^*$ back to a feasible high-dimensional solution with $\decision^* = \parametrization \newdecision^*$.
The low-dimensional surrogate is generally easier to solve, but lossy, because we restrict the feasible region to a hyperplane spanned by $\parametrization$.
If we were to use a random reparameterization, the solution we recover from the surrogate problem could be far from the actual optimum in the original optimization problem, which could significantly degrade the solution quality.

This is why we need to learn the surrogate and its reparameterization matrix.
Because we can differentiate through the surrogate optimization layer, we can estimate the impact of the reparameterization matrix on the final solution quality.
This allows us to run gradient descent to learn the reparameterization matrix $\parametrization$.
The process is shown in Figure~\ref{fig:surrogate-game-focused}.
Notice that the surrogate problem also takes the prediction $\behavior$ of the predictive model as input.
This implies that we can jointly learn the predictive model and the reparameterization matrix by solely solving the cheaper surrogate problem.

\begin{figure*}[ht]
    \centering
    \includegraphics[width=0.8\linewidth]{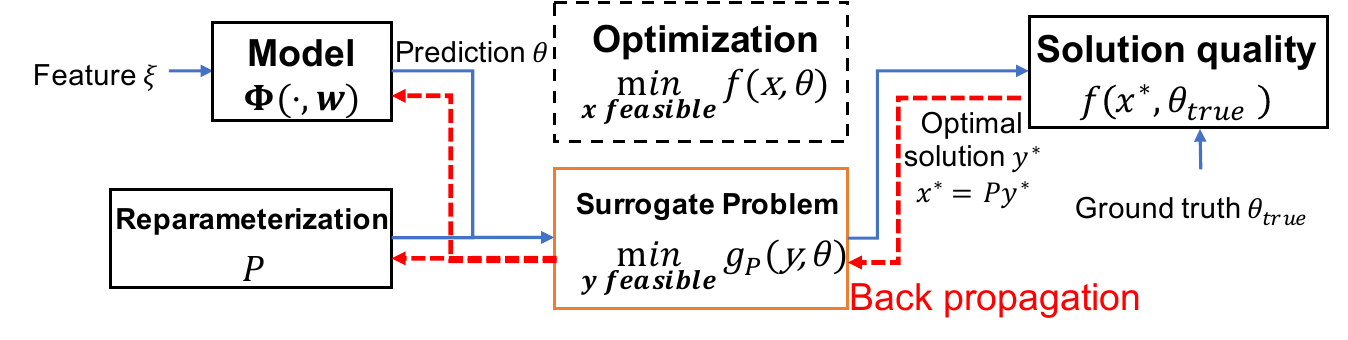}
    \caption{Surrogate decision-focused learning reparameterizes Equation~\ref{eqn:optimization} by $\decision = \parametrization \newdecision$ to get a surrogate model in Equation~\ref{eqn:reparameterized_optimization}. Then, forward and backward passes go through the surrogate model with a lower dimensional input $\newdecision$ to compute the optimal solution and train the model. 
    }
    \label{fig:surrogate-game-focused}
\end{figure*}

\paragraph{Differentiable optimization}
In order to differentiate through the optimization layer as shown in Figure~\ref{fig:decision-focused}, we can compute the derivative of the solution quality, evaluated on the optimal solution $\decision^*$ and true parameter $\behavior_{\text{true}}$, with respect to the model's weights $\weight$ by applying the chain rule:
\begin{align*}
\frac{d f(\decision^*, \behavior_{\text{true}})}{d \weight} = \frac{d f(\decision^*, \behavior_{\text{true}})}{d \decision^*} \frac{d \decision^*}{d \behavior} \frac{d \behavior}{d \weight} 
\end{align*}
where $\frac{d \decision^*}{d \behavior}$ can be obtained by differentiating through KKT conditions of the optimization problem.

Similarly, in Figure~\ref{fig:surrogate-game-focused}, we can apply the same technique to obtain the derivatives with respect to the weights $\weight$ and reparameterization matrix $\parametrization$:
\begin{align*}
\frac{d f(\decision^*, \behavior_{\text{true}})}{d \weight} = \frac{d f(\decision^*, \behavior_{\text{true}})}{d \decision^*} \frac{d \decision^*}{d \newdecision^*} \frac{d \newdecision^*}{d \behavior} \frac{d \behavior}{d \weight}, \quad \frac{d f(\decision^*, \behavior_{\text{true}})}{d \parametrization} = \frac{d f(\decision^*, \behavior_{\text{true}})}{d \decision^*} \frac{d \decision^*}{d \newdecision^*} \frac{d \newdecision^*}{d \parametrization}
\end{align*}
where $\newdecision^*$ is the optimal solution of the surrogate problem, $\decision^* = \parametrization \newdecision^*$, and $\frac{d \newdecision^*}{d \weight}, \frac{d \newdecision^*}{d P}$ can be computed by differentiating through the KKT conditions of the surrogate optimization problem.

\section{Analysis of Linear Reparameterization}\label{sec:theory}
The following sections address three major theoretical aspects: (i) complexity of solving the surrogate problem, (ii) learning the reparameterization, and (iii) learning the predictive model.

\subsection{Convexity and DR-Submodularity of the Reparameterized Problem}\label{sec:function_convexity}
In this section, we assume the predictive model and the linear reparameterization are fixed.
We prove below that convexity and continuous diminishing-return (DR) submodularity~\cite{bian2017continuous} of the original function $f$ is preserved after applying reparameterization. 
This implies that the new surrogate problem can be efficiently solved by gradient descent or Frank-Wolfe~\cite{bian2016guaranteed,jaggi2013revisiting,frank1956algorithm} algorithm with an approximation guarantee.

\begin{restatable}[]{proposition}{preservation}\label{thm:preservation}
If $f$ is convex, then $g_\parametrization(\newdecision, \behavior) = f(\parametrization \newdecision, \behavior) $ is convex.
\end{restatable}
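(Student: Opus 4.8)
The plan is to show that convexity is preserved under the linear reparameterization $\decision = \parametrization \newdecision$ by appealing to the standard fact that composing a convex function with an affine (here, linear) map preserves convexity. Since $\behavior$ is held fixed throughout this proposition, I would treat $f(\cdot, \behavior)$ as a fixed convex function on $\R^n$ and regard $g_\parametrization(\cdot, \behavior) = f(\parametrization\, \cdot\,, \behavior)$ as its composition with the linear map $\newdecision \mapsto \parametrization \newdecision$ from $\R^m$ to $\R^n$.

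The key steps, in order, are as follows. First I would fix two arbitrary points $\newdecision_1, \newdecision_2 \in \R^m$ and a scalar $\lambda \in [0,1]$. Next I would use linearity of the map $\parametrization$ to write $\parametrization(\lambda \newdecision_1 + (1-\lambda) \newdecision_2) = \lambda \parametrization \newdecision_1 + (1-\lambda) \parametrization \newdecision_2$. Then I would apply the convexity of $f$ in its first argument to the two image points $\parametrization \newdecision_1$ and $\parametrization \newdecision_2$, obtaining the desired inequality
\begin{align*}
g_\parametrization(\lambda \newdecision_1 + (1-\lambda)\newdecision_2, \behavior) &= f(\lambda \parametrization \newdecision_1 + (1-\lambda) \parametrization \newdecision_2, \behavior) \\
&\leq \lambda f(\parametrization \newdecision_1, \behavior) + (1-\lambda) f(\parametrization \newdecision_2, \behavior) \\
&= \lambda g_\parametrization(\newdecision_1, \behavior) + (1-\lambda) g_\parametrization(\newdecision_2, \behavior),
\end{align*}
which is exactly the definition of convexity for $g_\parametrization(\cdot, \behavior)$. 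The first and last equalities are just the definition of $g_\parametrization$ together with the linearity step, and the middle inequality is the hypothesis that $f$ is convex.

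Honestly, there is no real obstacle here: the result is the textbook statement that convexity is preserved under precomposition with an affine map, and the linear reparameterization $\parametrization$ is a special case of such a map. The only thing worth being careful about is to keep the parameter $\behavior$ fixed and emphasize that convexity is asserted in the decision variable $\newdecision$ only, so that $f(\cdot, \behavior)$ is genuinely a convex function on $\R^n$ to which the composition rule applies. A slightly more slick alternative, if a one-line argument is preferred, is to invoke the composition rule directly rather than unpacking the definition, but writing out the three-line chain above makes the argument self-contained and leaves nothing to verify.
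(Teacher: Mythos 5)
Your proof is correct, but it takes a different (and in fact more general) route than the paper's. The paper verifies convexity via the second-order condition: it computes $\nabla^2_{\newdecision} g_\parametrization = \parametrization^\top \nabla^2_{\decision} f \, \parametrization \succeq 0$, using the fact that $M^\top H M$ is positive semidefinite whenever $H$ is. That argument is a one-liner but implicitly assumes $f$ is twice differentiable. Your argument instead unpacks the definition of convexity and uses only the linearity of $\newdecision \mapsto \parametrization \newdecision$ together with the convexity inequality for $f(\cdot,\behavior)$; this is the textbook ``convexity is preserved under precomposition with an affine map'' fact and requires no smoothness of $f$ at all. Both establish the claim; yours covers nonsmooth convex objectives that the paper's Hessian computation does not formally handle, while the paper's version is slightly more compact and matches the style of its companion DR-submodularity proof, which also proceeds through second derivatives.
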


\begin{restatable}[]{proposition}{submodularity}\label{thm:submodularity}
If $f$ is DR-submodular and $\parametrization \geq 0$, then $g_\parametrization(\newdecision, \behavior) = f(\parametrization \newdecision, \behavior) $ is DR-submodular.
\end{restatable}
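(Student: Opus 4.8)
The plan is to use the first-order (gradient) characterization of continuous DR-submodularity: a differentiable function is DR-submodular precisely when its gradient is \emph{antitone}, i.e. $\newdecision_1 \leq \newdecision_2$ (coordinate-wise) implies $\nabla f(\newdecision_1) \geq \nabla f(\newdecision_2)$. I will show that $g_\parametrization(\cdot, \behavior)$ inherits this property from $f(\cdot, \behavior)$, with the non-negativity of $\parametrization$ being the crucial ingredient that enters in two separate places.

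First I would compute the gradient of the reparameterized objective by the chain rule, obtaining $\nabla_\newdecision g_\parametrization(\newdecision, \behavior) = \parametrization^\top \nabla_\decision f(\parametrization \newdecision, \behavior)$. Then, fixing any two points $\newdecision_1 \leq \newdecision_2$, the argument proceeds in two monotonicity steps. Step one: because $\parametrization \geq 0$ entrywise and $\newdecision_2 - \newdecision_1 \geq 0$, the image satisfies $\parametrization(\newdecision_2 - \newdecision_1) \geq 0$, so $\parametrization \newdecision_1 \leq \parametrization \newdecision_2$; thus the reparameterization preserves the coordinate-wise partial order. Step two: DR-submodularity of $f$ then gives $\nabla_\decision f(\parametrization \newdecision_1, \behavior) \geq \nabla_\decision f(\parametrization \newdecision_2, \behavior)$. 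Multiplying this nonnegative difference by the nonnegative matrix $\parametrization^\top$ preserves the inequality, yielding $\nabla_\newdecision g_\parametrization(\newdecision_1, \behavior) \geq \nabla_\newdecision g_\parametrization(\newdecision_2, \behavior)$, which is exactly antitonicity of the gradient of $g_\parametrization$ and hence DR-submodularity.

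For readers who prefer the second-order viewpoint (when $f$ is twice differentiable), I would note the alternative one-line computation $\nabla^2_\newdecision g_\parametrization = \parametrization^\top (\nabla^2_\decision f)\, \parametrization$, and observe that every entry $[\nabla^2_\newdecision g_\parametrization]_{ij} = \sum_{k,l} \parametrization_{ki} [\nabla^2_\decision f]_{kl} \parametrization_{lj}$ is a sum of products of a nonpositive factor $[\nabla^2_\decision f]_{kl} \leq 0$ (for DR-submodularity all Hessian entries, including the diagonal, are nonpositive) with two nonnegative factors drawn from $\parametrization$; each summand is therefore $\leq 0$, so the full Hessian of $g_\parametrization$ is entrywise nonpositive.

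The only real subtlety---and the step I would be most careful about---is that the hypothesis $\parametrization \geq 0$ is used \emph{twice} and is indispensable: once to guarantee that $\parametrization$ preserves the partial order on the domain, and again to guarantee that $\parametrization^\top$ preserves the resulting gradient inequality. If $\parametrization$ had any negative entry, either implication could fail; this also explains why the companion convexity result (Proposition~\ref{thm:preservation}) needs no sign restriction whereas this one does. A minor accompanying point is to confirm that $\parametrization \newdecision$ remains in the domain on which $f$ is assumed DR-submodular, which is ensured by carrying the constraints $h(\parametrization\newdecision)\le 0$ and $A\parametrization\newdecision = b$ through the reparameterization.
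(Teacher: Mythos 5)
Your proof is correct, but your primary argument takes a genuinely different route from the paper's. The paper works entirely at second order: it writes $\nabla^2_\newdecision g_\parametrization(\newdecision,\behavior) = \parametrization^\top \nabla^2_\decision f(\decision,\behavior)\, \parametrization$ and concludes that since every entry of $\nabla^2_\decision f$ is nonpositive (the definition of continuous DR-submodularity it invokes) and every entry of $\parametrization$ is nonnegative, every entry of the product is nonpositive. That is exactly the computation you relegate to your ``second-order viewpoint'' aside --- and in fact your entrywise expansion $[\nabla^2_\newdecision g_\parametrization]_{ij} = \sum_{k,l} \parametrization_{ki} [\nabla^2_\decision f]_{kl} \parametrization_{lj}$ makes explicit a step the paper leaves implicit, namely why a matrix with nonpositive entries sandwiched between nonnegative matrices remains entrywise nonpositive (the bare inequality $\parametrization^\top H \parametrization \leq 0$ as written in the paper could be misread as a statement about definiteness rather than entries). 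Your main argument instead uses the first-order characterization of DR-submodularity as antitonicity of the gradient, showing that $\parametrization \geq 0$ is used twice: once so that $\parametrization$ preserves the coordinatewise order on the domain, and once so that $\parametrization^\top$ preserves the resulting gradient inequality. This buys a proof valid for merely once-differentiable $f$ and isolates exactly where the sign hypothesis enters (and why the companion convexity result needs no such hypothesis), at the cost of being longer than the paper's one-line Hessian computation. Both arguments are sound.
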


\subsection{Convexity of Reparameterization Learning}\label{sec:reparameterization_convexity}
In this section, we assume the predictive model $\model$ is fixed. We want to analyze the convergence of learning the surrogate and its linear reparameterization $\parametrization$.
Let us denote the optimal value of the optimization problem in the form of Equation~\ref{eqn:reparameterized_optimization} to be $\text{OPT}(\behavior,\parametrization) \coloneqq \min\nolimits_{\newdecision \text{ feasible}} g_\parametrization(\newdecision, \behavior) \in \R$.
It would be ideal if $\text{OPT}(\behavior,\parametrization)$ is convex in $P$ so that gradient descent would be guaranteed to recover the optimal reparameterization.
Unfortunately, this is not true in general, despite the fact that we use a linear reparameterization: $\text{OPT}(\behavior, \parametrization)$ is not even globally quasiconvex in $\parametrization$.
\begin{restatable}[]{proposition}{counterexample}\label{thm:counterexample}
$\text{OPT}(\behavior, \parametrization) = \min\nolimits_{\newdecision \text{ feasible}} g_\parametrization(\newdecision, \behavior)$ is not globally quasiconvex in $\parametrization$.
\end{restatable}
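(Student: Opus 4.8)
The statement is a negative result, so the plan is to exhibit a single instance together with a single line segment in the space of reparameterization matrices along which $\text{OPT}$ strictly exceeds the maximum of its two endpoint values. Since quasiconvexity requires $\text{OPT}(\behavior, \lambda \parametrization_1 + (1-\lambda)\parametrization_2) \le \max\{\text{OPT}(\behavior,\parametrization_1), \text{OPT}(\behavior,\parametrization_2)\}$ for every $\lambda \in [0,1]$, any such violation shows that the sublevel sets of $\text{OPT}(\behavior,\cdot)$ are nonconvex, which is exactly the failure of quasiconvexity. The first step is to rewrite $\text{OPT}$ in a transparent geometric form: because the surrogate feasible set in $\newdecision$-space maps onto $\mathrm{range}(\parametrization)$ intersected with the original constraints, $\text{OPT}(\behavior,\parametrization)$ equals the minimum of $f(\cdot,\behavior)$ over the feasible region intersected with the column space of $\parametrization$. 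The key conceptual point to isolate is that the map $\parametrization \mapsto \mathrm{range}(\parametrization)$ is projective rather than affine: rescaling or sign-flipping $\parametrization$ leaves the subspace unchanged, so averaging two matrices can yield a subspace unrelated to either endpoint. This is the mechanism I would exploit.

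Next I would pick the simplest instance that realizes this collapse. Take $n = 2$ and $m = 1$, so that $\parametrization \in \R^{2\times 1}$ is a column vector and $\mathrm{range}(\parametrization)$ is the line through the origin in direction $\parametrization$. I would drop the constraints (set $h \equiv -1 \le 0$ and impose no equality constraint, so every $\newdecision$ is feasible) and fix $\behavior$ so that the induced objective is $f(\decision,\behavior) = \norm{\decision - c}^2$ with $c = (1,1)^\top$. A one-line calculation then gives $\text{OPT}(\behavior,\parametrization) = \min_{t\in\R}\norm{t\parametrization - c}^2 = \norm{c}^2 - (c^\top \parametrization)^2/\norm{\parametrization}^2$, i.e.\ the squared distance from $c$ to the line $\mathrm{span}(\parametrization)$, a quantity that depends only on the direction of $\parametrization$.

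Finally I would choose the two endpoints to point in nearly opposite directions along the good line through $c$, so that their average is a short vector in a bad direction. Concretely, set $\parametrization_1 = (1+\delta,\,1)^\top$ and $\parametrization_2 = (-1+\delta,\,-1)^\top$ for a small $\delta > 0$; both spanned lines nearly pass through $c$, so $\text{OPT}(\behavior,\parametrization_1)$ and $\text{OPT}(\behavior,\parametrization_2)$ both tend to $0$ as $\delta \to 0$. Their midpoint is $\parametrization_m = \tfrac12(\parametrization_1+\parametrization_2) = (\delta,\,0)^\top$, pointing along the $x$-axis, for which $\text{OPT}(\behavior,\parametrization_m) = \norm{c}^2 - \delta^2/\delta^2 = 1$. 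Hence for all sufficiently small $\delta$ we obtain $\text{OPT}(\behavior,\parametrization_m) = 1 > \max\{\text{OPT}(\behavior,\parametrization_1),\text{OPT}(\behavior,\parametrization_2)\}$, contradicting quasiconvexity. The only genuine obstacle here is conceptual rather than computational: one must recognize the projective behavior of $\mathrm{range}(\parametrization)$ and select endpoints whose convex combination both cancels in magnitude and flips direction; once they are chosen, the remaining arithmetic is immediate. The degenerate choice $\parametrization_1=(1,1)^\top$, $\parametrization_2=(-1,-1)^\top$, $\parametrization_m = 0$ gives the cleanest intuition---the midpoint subspace collapses to $\{0\}$ and $\text{OPT}$ jumps to $\norm{c}^2$---but the perturbed version above keeps every matrix nonzero and therefore avoids any objection about degeneracy.
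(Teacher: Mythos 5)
Your proof is correct, and it rests on the same core idea as the paper's own proof: exhibit two reparameterization matrices whose column spaces contain (or nearly contain) the unconstrained minimizer of a squared-distance objective, but whose average spans a subspace that misses it, so that $\text{OPT}$ at the midpoint strictly exceeds both endpoint values. The instantiations differ in an instructive way, however. The paper works in $\R^3$ with $m=2$, takes $f(\decision)=\norm{\decision-(1,1,1)^\top}^2$, and chooses the second matrix to be a column permutation of the first; both matrices are entrywise nonnegative and the midpoint collapses to rank one. Your example works in $\R^2$ with $m=1$ and obtains the collapse by a near sign-flip, which forces one endpoint to have negative entries. Both constructions validly establish the proposition, but the two failure modes are complementary relative to Theorem~\ref{thm:quasiconvexity}: that theorem asserts quasiconvexity in each column separately under the restriction $\parametrization \ge 0$, and the paper's counterexample (nonnegative entries, but two columns varied simultaneously) isolates the necessity of the column-wise restriction, whereas yours (a single column varied, but with sign changes) isolates the necessity of the nonnegativity restriction. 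Your version is lower-dimensional and comes with a clean closed form $\text{OPT}(\behavior,\parametrization)=\norm{c}^2-(c^\top\parametrization)^2/\norm{\parametrization}^2$, which makes the verification immediate; the paper's version is the one that motivates the positive result that follows it.
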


Fortunately, we can guarantee the partial quasiconvexity of $\text{OPT}(\behavior, \parametrization)$ in the following theorem:

\begin{restatable}[]{theorem}{quasiconvexity}\label{thm:quasiconvexity}
If $f(\cdot, \behavior)$ is quasiconvex, then $\text{OPT}(\behavior, \parametrization) = \min\nolimits_{\newdecision \text{ feasible}} g_\parametrization(\newdecision, \behavior)$ is quasiconvex in $\parametrization_i$, the $i$-th column of matrix $P$, for any $1 \leq i \leq m$, where $\parametrization = [\parametrization_1, \parametrization_2, \dots, \parametrization_m]$.
\end{restatable}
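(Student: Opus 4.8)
The plan is to prove quasiconvexity of the map $\parametrization_i \mapsto \text{OPT}(\behavior, \parametrization)$ directly from the sublevel-set characterization of quasiconvexity. Fixing the remaining columns $\{\parametrization_j\}_{j \neq i}$ and a threshold $\alpha$, I would show that the set of $i$-th columns $\parametrization_i$ for which $\text{OPT}(\behavior, \parametrization) \le \alpha$ is convex. Concretely, I take two columns $\parametrization_i^{(1)}, \parametrization_i^{(2)}$ each achieving value $\le \alpha$, form the matrices $\parametrization^{(1)}, \parametrization^{(2)}$ (which agree off column $i$) together with the interpolated matrix $\parametrization^\lambda$ whose $i$-th column is $\lambda \parametrization_i^{(1)} + (1-\lambda)\parametrization_i^{(2)}$, and argue that $\text{OPT}(\behavior, \parametrization^\lambda) \le \alpha$ as well.

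First I would pass to witnesses in the original decision space. Let $\newdecision^{(1)}, \newdecision^{(2)}$ be optimal feasible surrogate solutions for $\parametrization^{(1)}, \parametrization^{(2)}$, and set $\decision^{(1)} = \parametrization^{(1)} \newdecision^{(1)}$ and $\decision^{(2)} = \parametrization^{(2)} \newdecision^{(2)}$; both are feasible for the original problem and satisfy $f(\decision^{(k)}, \behavior) \le \alpha$. Since reparameterized feasibility is exactly original feasibility restricted to a subspace, it suffices to exhibit, for the matrix $\parametrization^\lambda$, a feasible $\newdecision$ whose image $\parametrization^\lambda \newdecision$ is an original-feasible point with objective $\le \alpha$.

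The crux is that naive interpolation with the same weight $\lambda$ fails, because $\parametrization^\lambda \newdecision$ blends the two columns $\parametrization_i^{(1)}, \parametrization_i^{(2)}$. Instead I combine the two solutions with a \emph{different} weight $\mu$, setting $\newdecision := \mu \newdecision^{(1)} + (1-\mu)\newdecision^{(2)}$, and choose $\mu$ so that the interpolated column collapses back to a genuine convex combination in the original space, namely $\parametrization^\lambda \newdecision = \mu \decision^{(1)} + (1-\mu)\decision^{(2)}$. Matching the coefficients of $\parametrization_i^{(1)}$ and $\parametrization_i^{(2)}$ reduces to the single scalar equation $\frac{\mu}{1-\mu} = \frac{\lambda\, \newdecision_i^{(2)}}{(1-\lambda)\, \newdecision_i^{(1)}}$, which admits a solution $\mu \in (0,1)$ whenever $\newdecision_i^{(1)}, \newdecision_i^{(2)}$ are positive (using that the surrogate variables are nonnegative in the $\parametrization \ge 0$ regime). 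Then $\parametrization^\lambda \newdecision = \mu \decision^{(1)} + (1-\mu)\decision^{(2)}$ is a convex combination of original-feasible points, hence feasible because the original region $\{\decision : h(\decision) \le 0,\ A\decision = b\}$ is convex, and quasiconvexity of $f(\cdot, \behavior)$ yields $f(\parametrization^\lambda \newdecision, \behavior) \le \max\{f(\decision^{(1)}, \behavior), f(\decision^{(2)}, \behavior)\} \le \alpha$, so $\text{OPT}(\behavior, \parametrization^\lambda) \le \alpha$.

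The main obstacle I expect is guaranteeing $\mu \in [0,1]$ in the degenerate cases. If $\newdecision_i^{(1)} = 0$ (resp. $\newdecision_i^{(2)} = 0$) the $i$-th column is unused by that solution, so $\decision^{(1)}$ (resp. $\decision^{(2)}$) already lies in the column space of $\parametrization^\lambda$ and I take $\mu = 1$ (resp. $\mu = 0$). The genuinely delicate point is the sign condition: the construction needs $\newdecision_i^{(1)}, \newdecision_i^{(2)}$ of the same sign so that the ratio above is nonnegative, which is exactly where nonnegativity of the surrogate variables (consistent with $\parametrization \ge 0$) enters, and is also consistent with Proposition~\ref{thm:counterexample}, where quasiconvexity fails jointly over $\parametrization$. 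Finally, if the minimum is not attained I would run the same argument with $\epsilon$-optimal witnesses and conclude $\text{OPT}(\behavior, \parametrization^\lambda) \le \alpha + \epsilon$ for every $\epsilon > 0$, hence $\le \alpha$.
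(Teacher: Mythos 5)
Your proposal is correct and follows essentially the same route as the paper's own proof: the reweighted combination $\newdecision = \mu \newdecision^{(1)} + (1-\mu)\newdecision^{(2)}$ with $\mu/(1-\mu) = \lambda \newdecision_i^{(2)}/\bigl((1-\lambda)\newdecision_i^{(1)}\bigr)$ is exactly the paper's construction $x'' = \frac{1}{c/y_1 + (1-c)/y_1'}\bigl(\frac{c}{y_1}x + \frac{1-c}{y_1'}x'\bigr)$, including the same handling of the degenerate case $\newdecision_i^{(k)} = 0$ and the same reliance on nonnegativity of the surrogate variables. If anything, you are slightly more careful than the paper in flagging the sign condition and the non-attainment case.
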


This indicates that the problem of optimizing each meta-variable given the values of the others is tractable, providing at least some reason to think that the training landscape for the reparameterization is amenable to gradient descent. This theoretical motivation is complemented by our experiments, which show successful training with standard first-order methods.

\subsection{Sample Complexity of Learning Predictive Model in Surrogate Problem}\label{sec:sample_complexity}
In this section, we fix the linear reparameterization and analyze the sample complexity of learning the predictive model to achieve small decision-focused loss in the objective value.
We analyze a special case where our objective function $f$ is a linear function and the feasible region $S$ is compact, convex, and polyhedron.
Given the hypothesis class of our model $\model \in \mathcal{H}$, we can use results from Balghiti et al.~\cite{el2019generalization} to bound the Rademacher complexity and the generalization bound of the solution quality obtained from the surrogate problem.
For any hypothesis class with a finite Natarajan dimension, the surrogate problem preserves the linearity of the objective function. Thus learning in the linear surrogate problem also \emph{preserves the convergence of the generalization bound, and thus the convergence of the solution quality}. 
More specifically, when the hypothesis class is linear $H = H_{\text{lin}}$, the Rademacher complexity bound depends on the dimensionality of the surrogate problem and the diameter of the feasible region, which can be shrunk by using a low-dimensional surrogate:
\begin{restatable}[]{theorem}{generalization}\label{thm:generalization}
Let $\mathcal{H}_{\text{lin}}$ be the hypothesis class of all linear function mappings from $\xi \in \Xi \subset \R^p$ to $\theta \in \Theta \in \R^n$, and let $\parametrization \in \R^{n \times m}$ be a linear reparameterization used to construct the surrogate. The expected Rademacher complexity over $t$ i.i.d. random samples drawn from $\mathcal{D}$ can be bounded by:
\begin{align}\label{eqn:genealization_bound}
Rad^t(\mathcal{H}_{\text{lin}}) \leq 2 m C \sqrt{\frac{2 p \log(2 m t \norm{P^+} \rho_2(S)) }{t}} + O(\frac{1}{t})
\end{align}
where $C \coloneqq sup_\theta (max_x f(x,\theta) - min_{x’} f(x’,\theta) )$ is the gap between the optimal solution quality and the worst solution quality, $\rho_2(S)$ is the diameter of the set $S$, and $P^+$ is the pseudoinverse.
\end{restatable}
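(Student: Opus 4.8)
The plan is to recognize that, because $f$ is linear, the surrogate problem is itself an instance of the linear predict-then-optimize framework of Balghiti et al., but with a cost vector of reduced dimension $m$ rather than $n$. Writing the linear objective as $f(\decision, \behavior) = \behavior^\top \decision$, the reparameterized objective becomes $g_\parametrization(\newdecision, \behavior) = \behavior^\top \parametrization \newdecision = (\parametrization^\top \behavior)^\top \newdecision$, which is linear in $\newdecision$ with effective cost vector $\parametrization^\top \behavior \in \R^m$. Composing with the linear predictive model $\behavior = \model(\feature, \weight)$, the induced class over effective cost vectors is $\{\feature \mapsto \parametrization^\top \model(\feature, \weight)\}$, a class of linear maps from $\R^p$ into $\R^m$. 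The key point is that this collapses the output dimension of the relevant hypothesis class from $n$ to $m$, and it is this collapse that we expect to surface as the leading $m$-dependence in the bound.

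First I would instantiate the Rademacher complexity bound of Balghiti et al. for the $m$-dimensional surrogate linear program, whose feasible region is the preimage $S_\parametrization = \{\newdecision : \parametrization\newdecision \in S\}$. Their bound for a linear hypothesis class depends on the feature dimension $p$, the sample size $t$, the range $C$ of the induced loss (here the gap between the best and worst attainable objective value), and a geometric quantity controlled by the diameter of the feasible region appearing inside a logarithm. The prefactor $2mC$ and the $\sqrt{2p\log(\cdot)/t}$ rate are exactly what their analysis yields once the cost dimension is set to $m$; the task here is to verify that their covering/margin argument applies verbatim to the composed cost vectors $\parametrization^\top\behavior$.

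Next I would bound the diameter of the surrogate feasible region. Because the surrogate cost $(\parametrization^\top\behavior)^\top\newdecision$ depends on $\newdecision$ only through its component in the row space of $\parametrization$, I may restrict attention to $\newdecision$ lying in that subspace, where $\newdecision = \parametrization^+ \parametrization \newdecision$ since $\parametrization^+\parametrization$ is the orthogonal projector onto the row space. For any two feasible points this gives $\norm{\newdecision_1 - \newdecision_2} \leq \norm{\parametrization^+}\,\norm{\parametrization\newdecision_1 - \parametrization\newdecision_2} \leq \norm{\parametrization^+}\,\rho_2(S)$, so $\rho_2(S_\parametrization) \leq \norm{\parametrization^+}\rho_2(S)$. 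Substituting this into the logarithmic term of the instantiated bound produces the $\log(2mt\norm{P^+}\rho_2(S))$ factor, completing the estimate up to the $O(1/t)$ remainder coming from the lower-order terms in their statement.

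The main obstacle I anticipate is the careful handling of the null space of $\parametrization$ when bounding $\rho_2(S_\parametrization)$: the preimage $S_\parametrization$ is generally unbounded along $\ker(\parametrization)$, so the diameter bound holds only after projecting onto the row space, and one must confirm this projection is without loss of generality for both the optimization and the loss evaluation, which it is since both depend on $\newdecision$ only through $\parametrization\newdecision$. The second point requiring care is checking that the dimension-$m$ and the $\norm{P^+}$ dependence emerge cleanly when specializing the general polyhedral bound of Balghiti et al. to the composed class $\parametrization^\top \model(\cdot, \weight)$, rather than re-deriving their covering-number argument from scratch.
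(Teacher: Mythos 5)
Your proposal is correct and follows essentially the same route as the paper's proof: both reduce the surrogate to an $m$-dimensional linear predict-then-optimize instance with composed cost class $\xi \mapsto \parametrization^\top B \xi$, instantiate the Balghiti et al.\ bound with dimension $m$ in place of $n$, and bound the surrogate feasible region's diameter by $\norm{\parametrization^+}\rho_2(S)$ via the pseudoinverse. Your explicit attention to the kernel of $\parametrization$ is a slightly more careful treatment of a point the paper dispatches with the parenthetical ``assuming the matrix does not collapse,'' i.e.\ that $\parametrization$ has full column rank so $\parametrization^+\parametrization = I$.
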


Equation~\ref{eqn:genealization_bound} gives a bound on the Rademacher complexity, an upper bound on the generalization error with $t$ samples given.
Although a low-dimensional surrogate can lead to less representational power (lead to lower decision quality), we can also see that in Equation~\ref{eqn:genealization_bound} when the reparameterization size $m$ is smaller, a compact surrogate can get a better generalizability.
This implies that we have to choose an appropriate reparameterization size to balance representational power and generalizability.
\section{Experiments}
We conduct experiments on three different domains where decision-focused learning has been applied: (i) adversarial behavior learning in network security games with a non-convex objective~\cite{wang2020scalable}, (ii) movie recommendation with a submodular objective~\cite{wilder2019melding}, and (iii) portfolio optimization problem with a convex quadratic objective~\cite{ferber2019mipaal}.
Throughout all the experiments, we compare the performance and the scalability of the surrogate learning (\textbf{surrogate}), two-stage (\textbf{TS}), and decision-focused (\textbf{DF}) learning approaches. 
Performance is measured in terms of regret, which is defined as the difference between the achieved solution quality and the solution quality if the unobserved parameters $\behavior^*$ were observed directly---smaller is better.
To compare scalability, we show the training time per epoch and inference time.
The inference time corresponds to the time required to compute a decision for all instances in the testing set after training is finished. A short inference time may have intrinsic value, e.g., allowing the application to be run in edge computing settings.
All methods are trained using gradient descent with optimizer Adam~\cite{kingma2014adam} with learning rate $0.01$ and repeated over $30$ independent runs to get the average.
Each model is trained for at most $100$ epochs with early stopping~\cite{prechelt1998early} criteria when $3$ consecutive non-improving epochs occur on the validation set.
The reparameterization size is set to be 10\% of the problem size throughout all three examples\footnote{The implementation of this paper can be found in the following link: \url{https://github.com/guaguakai/surrogate-optimization-learning}}.

\subsection{Adversarial Behavior Learning and Interdiction Games}
Given a network structure $G=(V,E)$, a NSG (network security game)~ \cite{washburn1995two,fischetti2016interdiction,roy2010survey} models the interaction between the defender, who places checkpoints on a limited number of edges in the graph, and an attacker who attempts to travel from a source to any of a set of target nodes in order to maximize the expected reward. The NSG is an extension of Stackelberg security games~\cite{sinha2018stackelberg,kar2015game}, meaning that the defender commits to a mixed strategy first, after which the attacker chooses the path (having observed the defender's mixed strategy but not the sampled pure strategy).
In practice, the attacker is not perfectly rational. Instead, the defender can attempt to predict the attacker's boundedly rational choice of path by using the known features of the nodes en route (e.g., accessibility or safety of hops) together with previous examples of paths chosen by the attacker. 

Once the parameters $\behavior$ of the attacker behavioral model are given, finding the optimal defender's strategy reduces to an optimization problem $\max f(\decision, \behavior)$ where $\decision_e$ is the probability of covering edge $e \in E$ and $f$ gives the defender's expected utility for playing mixed strategy $\decision$ when the attacker's response is determined by $\behavior$. The defender must also satisfy the budget constraint $\sum\nolimits_{e \in E} \decision_e \leq k$ where $k = 3$ is the total defender resources.
We use a GCN (graph convolutional network)~\cite{morris2019weisfeiler,kipf2016semi,hamilton2017inductive} to represent the predictive model of the attacker.
We assume the attacker follows reactive Markovian behavior~\cite{wang2020scalable,gutfraind2010interdiction}, meaning that the attacker follows a random walk through the graph, where the probability of transitioning across a given edge $(u,v)$ is a function of the defender's strategy $\decision$ and an unknown parameter $\behavior_v$ representing the "attractiveness" of node $v$. The walk stops when the attacker either is intercepted by crossing an edge covered by the defender or reaches a target. The defender's utility is $-u(t)$ if the attacker reaches target $t$ and $0$ otherwise, and $f$ takes an expectation over both the random placement of the defender's checkpoints (determined by $\decision$) and the attacker's random walk (determined by $\decision$ and $\behavior$).
Our goal is to learn a GCN which takes node features as input and outputs the attractiveness over nodes $\theta$.

\paragraph{Experimental setup:}
We generate random geometric graphs of varying sizes with radius $0.2$ in a unit square.
We select $5$ nodes uniformly at random as targets with payoffs $u(t) \sim \text{Unif}(5,10)$ and $5$ nodes as sources where the attacker chooses uniformly at random from.
The ground truth attractiveness value $\behavior_v$ of node $v \in V$ is proportional to the proximity to the closest target plus a random perturbation sampled as $\text{Unif}(-1,1)$ which models idiosyncrasies in the attacker's preferences.
The node features $\xi$ are generated as $\xi = \text{GCN}(\theta) + 0.2 \mathcal{N}(0,1)$, where $\text{GCN}$ is a randomly initialized GCN with four convolutional layers and three fully connected layers. 
This generates random features with correlations between $\xi_v$ (the features of node $v$) and both $\theta_v$ and the features of nearby nodes. Such correlation is expected for real networks where neighboring locations are likely to be similar. The defender's predictive model (distinct from the generative model) uses two convolutional and two fully connected layers, modeling a scenario where the true generative process is more complex than the learned model.
We generate 35 random $(\feature, \behavior)$ pairs for the training set, 5 for validation, and 10 for testing.
Since decision-focused (DF) learning fails to scale up to larger instances, we additionally compare to a block-variable sampling approach specialized to NSG~\cite{wang2020scalable} (\textbf{block}), which can speed up the backward pass by back-propagating through randomly sampled variables.

\subsection{Movie Recommendation and Broadcasting Problem}
In this domain, a broadcasting company chooses $k$ movies out of a set of $n$ available to acquire and show to their customers $C$. $k$ reflects a budget constraint. Each user watches their favorite $T$ movies, with a linear valuation for the movies they watch. This is a variant of the classic facility location problem; similar domains have been used to benchmark submodular optimization algorithms~\cite{li2015improved,du2012primal}. In our case, the additional complication is that the user's preferences are unknown. Instead, the company uses user's past behavior to predict $\behavior_{ij} \in [0,1]$, the preference score of user $j$ for movie $i$.

The company would like to maximize the overall satisfaction of users without exceeding the budget constraint $k = 10$.
$\{ \decision_i \}_{i \in \{1,2,\dots,n\}}$ represents the decision of whether to acquire movie $i$ or not.
Once the preferences $\behavior_{ij}$ are given, the company wants to maximize the objective function: 

\begin{align}
f(\decision) \coloneqq  \sum\limits_{j \in C} \text{user $j$'s satisfaction} = \sum\limits_{j \in C} ~ \max\limits_{\substack{z_j \in \{0,1\}^n \\ ~\text{s.t.} \sum_i z_{ij} = T}} ~ \sum\limits_{i \in \{1,2,\dots,n\}} x_i z_{ij} \behavior_{ij}
\end{align}
where $z_j$ denotes the user $j$'s selection over movies.

\paragraph{Experimental setup:}
We use neural collaborative filtering~\cite{he2017neural} to learn the user preferences. Commonly used in recommendation systems, the idea is to learn an embedding for each movie and user.
The ratings are computed by feeding the concatenated user's and movie's embeddings to a neural network with fully connected layers.
We use MovieLens~\cite{harper2015movielens} as our dataset.
The MovieLens dataset includes 25M ratings over 62,000 movies by 162,000 users.
We first randomly select $n$ movies as our broadcasting candidates.
We additionally select $200$ movies and use the users' ratings on the movies as the users' features.
Then we split the users into disjoint groups of size $100$ and each group serves as an instance of broadcasting task, where we want to choose $k=10$ from the $n$ candidate movies to recommend to the group members. Each user chooses $T=3$ movies.
70\% of the user groups are used for training, 10\% for validation, and 20\% for testing.



\begin{figure}
    \centering
    \begin{subfigure}[t]{0.32\textwidth}
        \includegraphics[width=\textwidth]{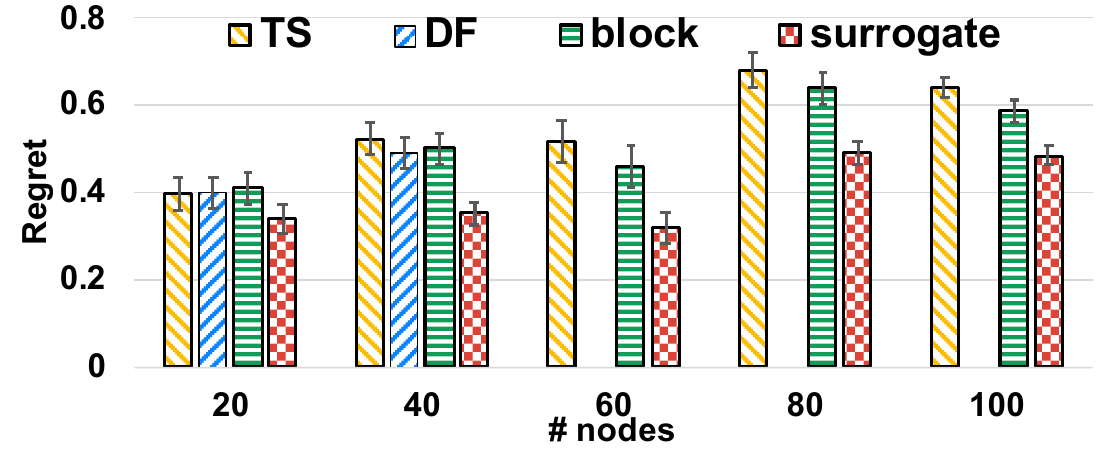}
        \caption{Performance in regret}
        \centering
        \label{fig:NSG_performance}
    \end{subfigure}
    \hfill
    \begin{subfigure}[t]{0.32\textwidth}
        \includegraphics[width=\textwidth]{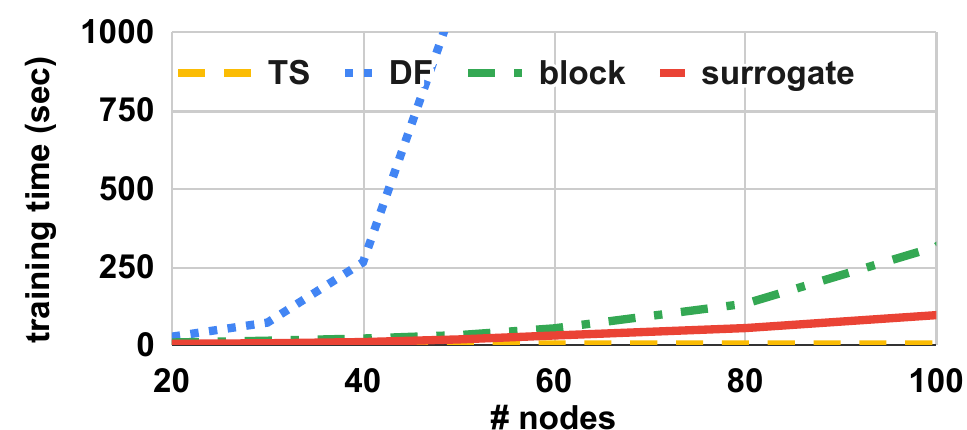}
        \caption{Training time per epoch}
        \label{fig:NSG_training}
        \centering
    \end{subfigure}
    \hfill
    \begin{subfigure}[t]{0.32\textwidth}
        \includegraphics[width=\textwidth]{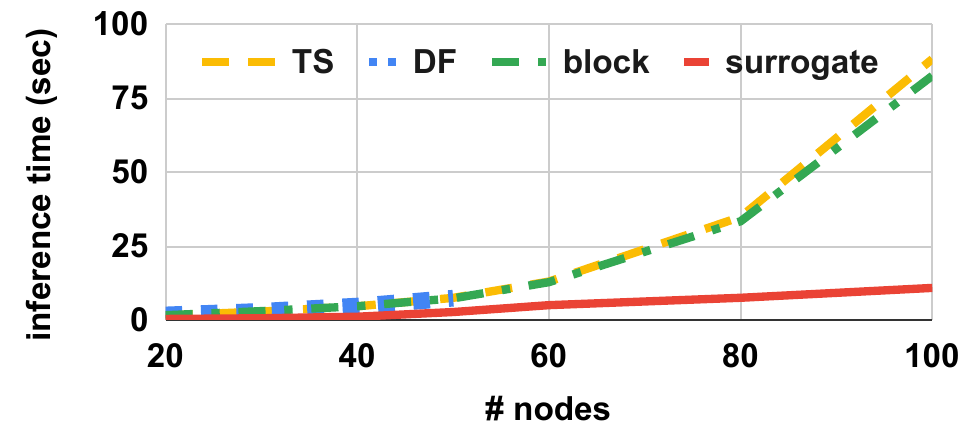}
        \caption{Inference time}
        \label{fig:NSG_inference}
        \centering
    \end{subfigure}
    \caption{Experimental results in network security games with a non-convex optimization problem.}
    \label{fig:NSG_results}
    \centering
    \begin{subfigure}[t]{0.32\textwidth}
        \includegraphics[width=\textwidth]{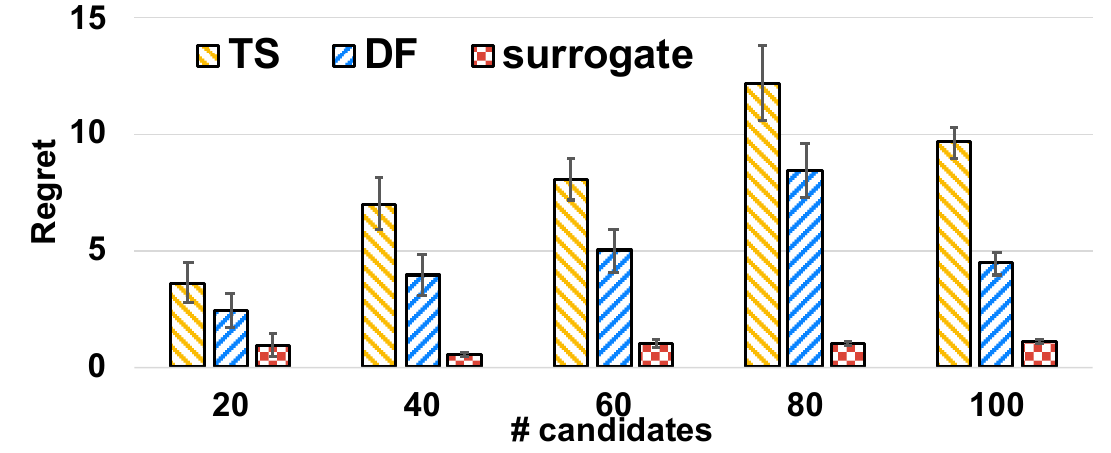}
        \caption{Performance in regret}
        \centering
        \label{fig:movie_performance}
    \end{subfigure}
    \hfill
    \begin{subfigure}[t]{0.32\textwidth}
        \includegraphics[width=\textwidth]{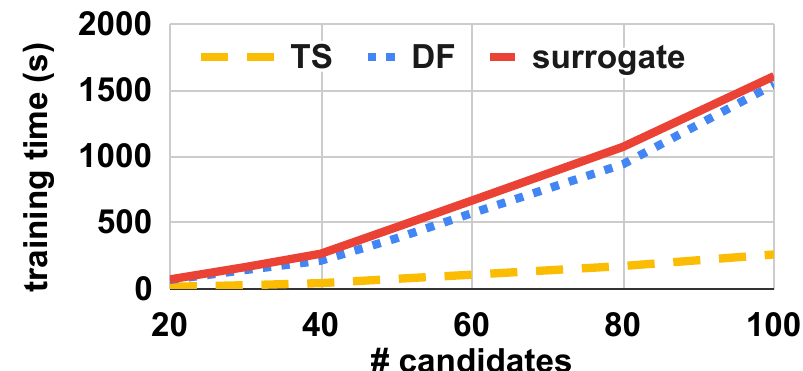}
        \caption{Training time per epoch}
        \centering
        \label{fig:movie_training}
    \end{subfigure}
    \hfill
    \begin{subfigure}[t]{0.32\textwidth}
        \includegraphics[width=\textwidth]{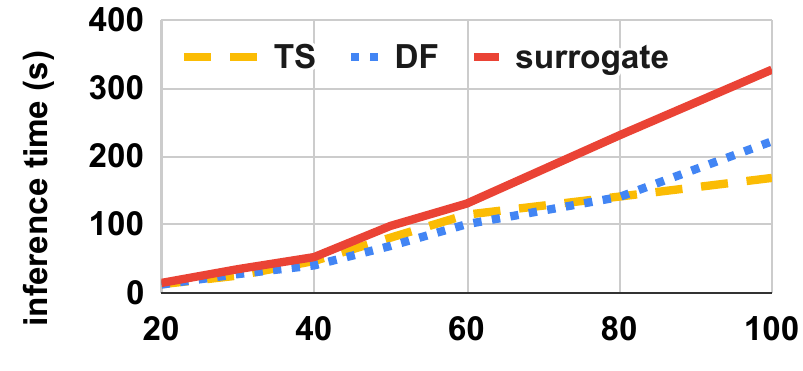}
        \caption{Inference time}
        \centering
        \label{fig:movie_inference}
    \end{subfigure}
    \caption{Experimental results in movie recommendation with a submodular objective. Surrogate achieves much better performance by smoothing the training landscape.}
    \label{fig:movie}
    \centering
    \begin{subfigure}[t]{0.32\textwidth}
        \includegraphics[width=\textwidth]{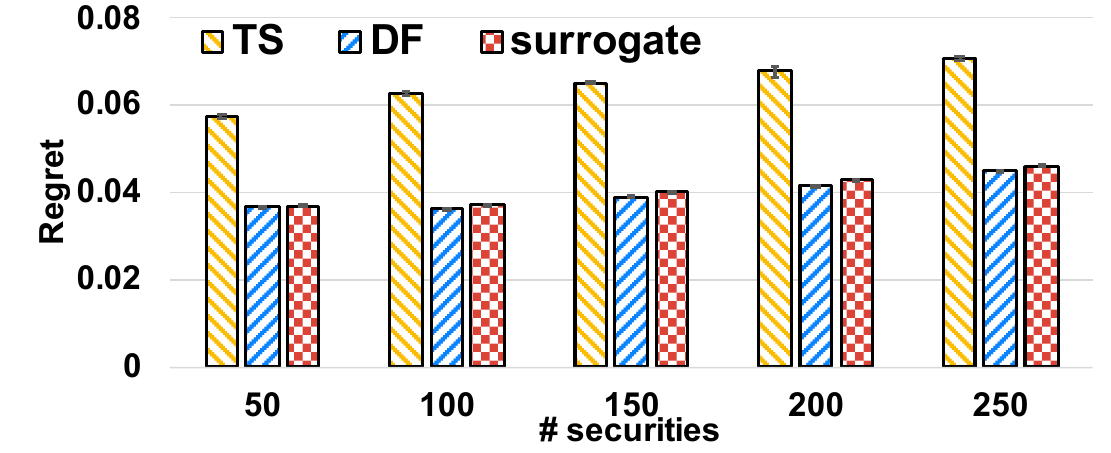}
        \caption{Performance in regret}
        \centering
        \label{fig:portfolio_performance}
    \end{subfigure}
    \hfill
    \begin{subfigure}[t]{0.32\textwidth}
        \includegraphics[width=\textwidth]{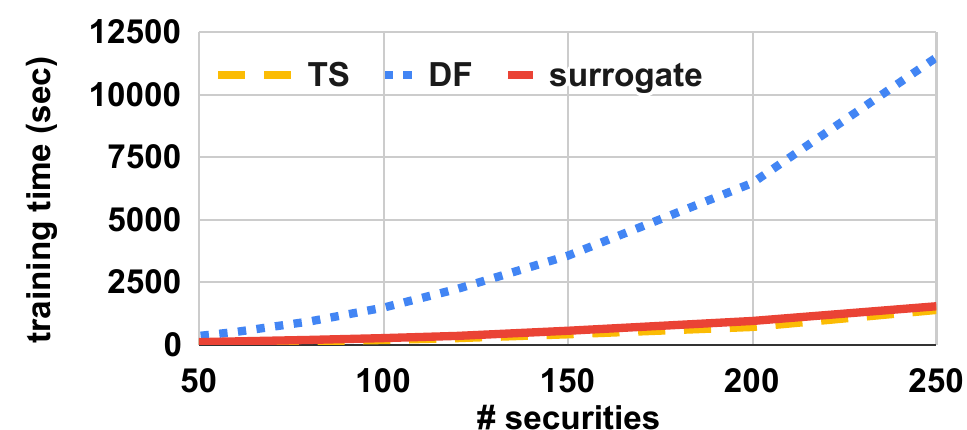}
        \caption{Training time per epoch}
        \centering
        \label{fig:portfolio_training}
    \end{subfigure}
    \hfill
    \begin{subfigure}[t]{0.32\textwidth}
        \includegraphics[width=\textwidth]{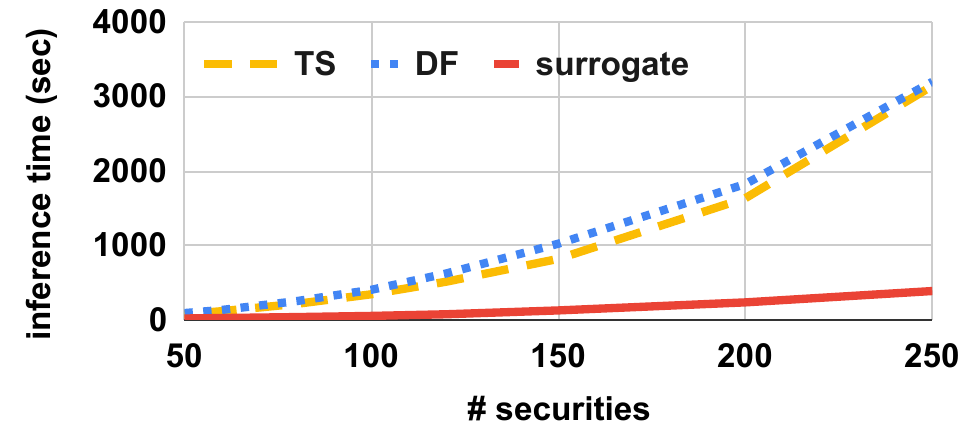}
        \caption{Inference time}
        \centering
        \label{fig:portfolio_inference}
    \end{subfigure}
    \caption{Experimental results in portfolio optimization with a convex optimization problem. Surrogate performs comparably, but achieves a 7-fold speedup in training and inference.}
    \label{fig:portfolio}
\end{figure}

\subsection{Stock Market Portfolio Optimization}
Portfolio optimization can be treated as an optimization problem with missing parameters~\cite{popescu2007robust}, where the return and the covariance between stocks in the next time step are not known in advance. We learn a model that takes features for each security and outputs the predicted future return. We adopt the classic Markowitz~\cite{markowitz2000mean,michaud1989markowitz} problem setup, where investors are risk-averse and wish to maximize a weighted sum of the immediate net profit and the risk penalty. The investor chooses a vector $\decision \geq 0$ with $\sum \decision_i = 1$, where $\decision_i$ represents the fraction of money invested in security $i$. The investor aims to maximize the penalized immediate return $f(\decision) \coloneqq p^\top \decision - \lambda \decision^\top Q \decision$, where $p$ is the immediate net return of all securities and $Q \in \R^{n \times n}$ is a positive semidefinite matrix representing the covariance between the returns of different securities. A high covariance implies two securities are highly correlated and thus it is more risky to invest in both. We set the risk aversion constant to be $\lambda = 2$. 
\paragraph{Experimental setup:}
We use historical daily price and volume data from 2004 to 2017 downloaded from the Quandl WIKI dataset~\cite{QuandlWIKI}. We evaluate on the SP500, a collection of the 505 largest companies representing the American market. Our goal is to generate daily portfolios of stocks from a given set of candidates. Ground truth returns are computed from the time series of prices, while the ground truth covariance of two securities at a given time step is set to be the cosine similarity of their returns in the next 10 time steps. We take the previous prices and rolling averages at a given time step as features to predict the returns for the next time step. 
We learn the immediate return $p$ via a neural network with two fully connected layers with 100 nodes each. 
To predict the covariance matrix $Q$, we learn an 32-dimensional embedding for each security, and the predicted covariance between two securities is the cosine similarity of their embeddings.
We chronologically split the dataset into training, validation, and test sets with 70\%, 10\%, and 20\% of the data respectively.

\begin{figure}
    \centering
    \begin{subfigure}[t]{0.48\textwidth}
        \includegraphics[width=0.48\textwidth]{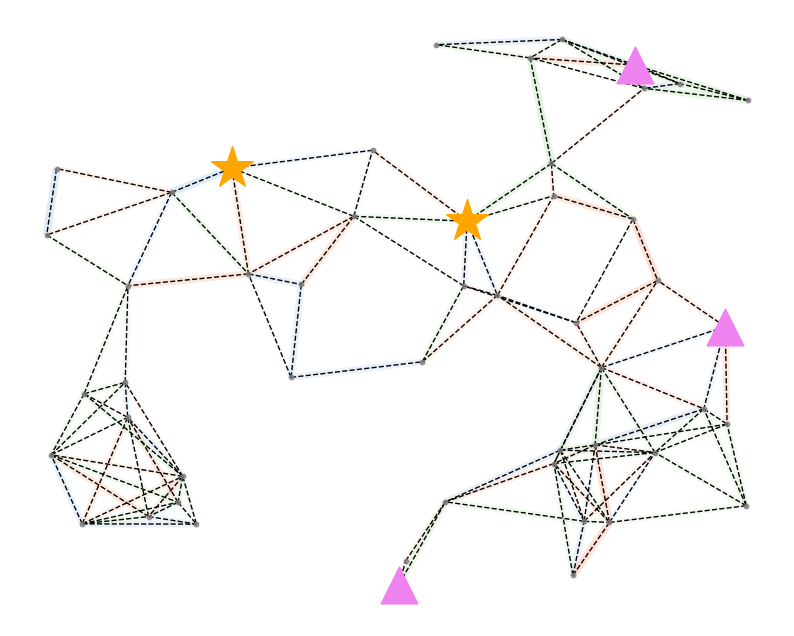}
        \hfill
        \includegraphics[width=0.48\textwidth]{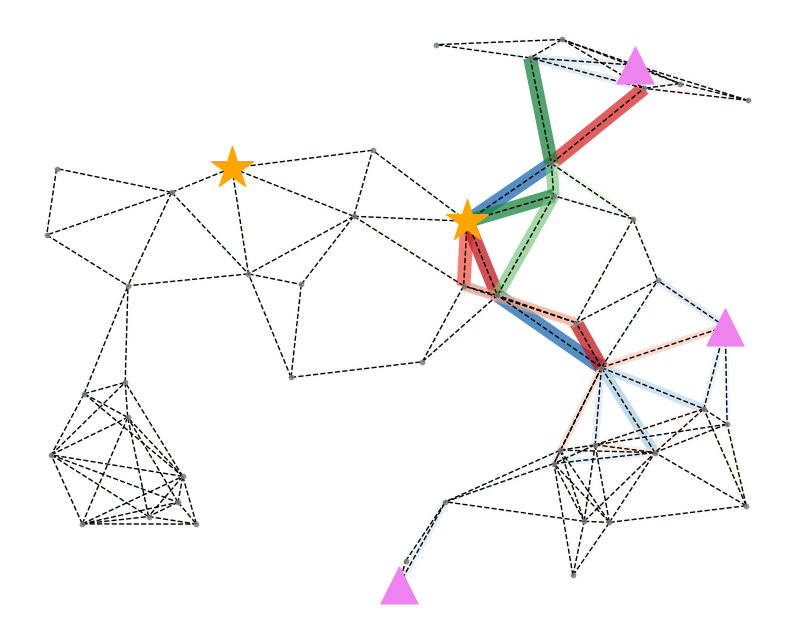}
        \caption{A NSG instance with 50 nodes, 2 targets (orange stars), and 3 sources (purple triangles).}
        \centering
        \label{fig:NSG_visualization}
    \end{subfigure}
    \hfill
    \begin{subfigure}[t]{0.48\textwidth}
        \includegraphics[width=0.48\textwidth]{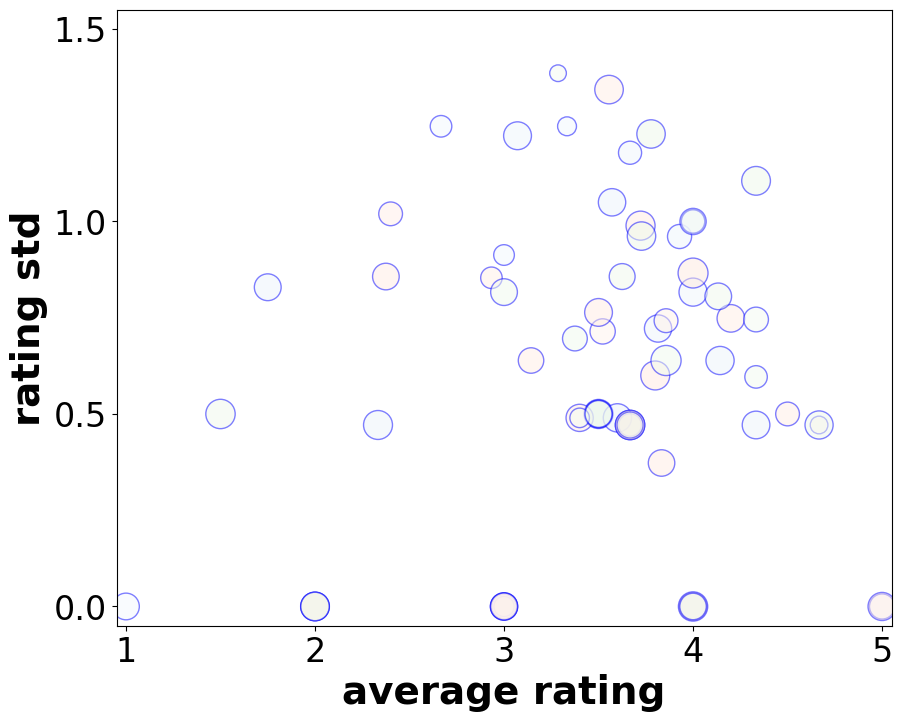}
        \hfill
        \includegraphics[width=0.48\textwidth]{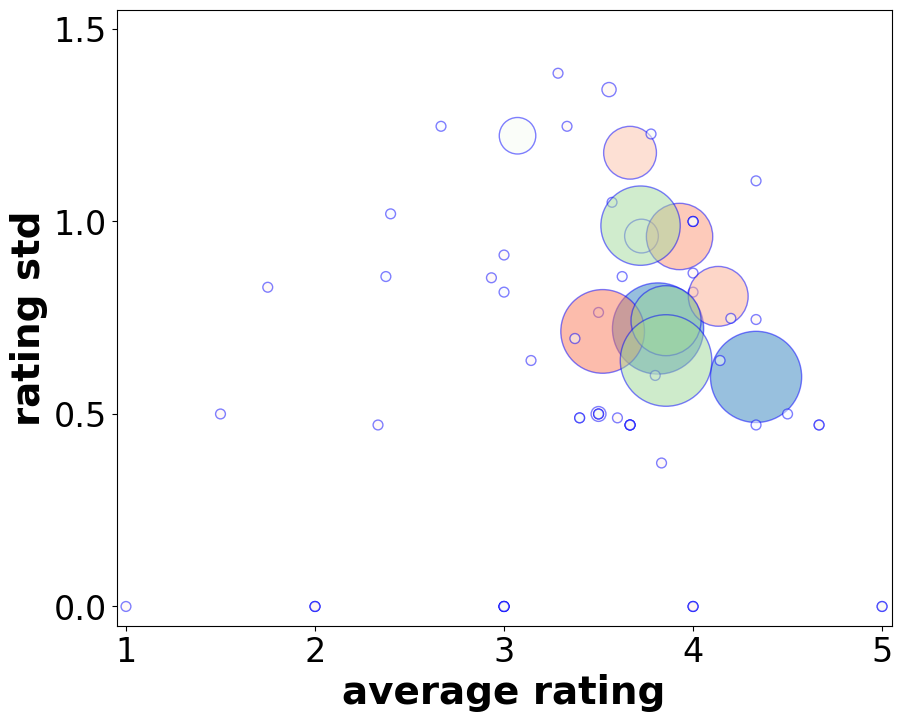}
        \caption{100 candidate movies shown as circles with their average ratings and standard deviations as two axes.}
        \centering
        \label{fig:movie_visualization}
    \end{subfigure}
    \caption{These plots visualize how the surrogate captures the underlying problem structure. Both domains use a reparameterization with $3$ meta-variables, each shown in red, blue, and green. The color indicates the most significant meta-variable governing the edge or circle, while the color intensity and size represent the weights put on it. The left figure in both domains shows the initial reparameterization, while the right figure shows the reparameterization after 20 epochs of training.}
    \label{fig:visualization}
\end{figure}
\section{Discussion of Experimental Results}
\paragraph{Performance:} Figures~\ref{fig:NSG_performance},~\ref{fig:movie_performance}, and~\ref{fig:portfolio_performance} compare the regret of our surrogate approach to the other approaches. In the non-convex (Figure~\ref{fig:NSG_performance}) and submodular (Figure~\ref{fig:movie_performance}) settings, we see a larger improvement in solution quality relative to decision-focused learning.
This is due to the huge number of local minima and plateaus in these two settings where two-stage and decision-focused approaches can get stuck.
For example, when an incorrect prediction is given in the movie recommendation domain, some recommended movies could have no users watching them, resulting in a sparse gradient due to non-smoothness induced by the $\max$ in the objective function.
The surrogate approach can instead spread the sparse gradient by binding variables with meta-variables, alleviating gradient sparsity.
We see relatively less performance improvement (compared to decision-focused) when the optimization problem is strongly convex and hence smoother (Figure~\ref{fig:portfolio_performance}), though the surrogate approach still achieves similar performance to the decision-focused approach.

\paragraph{Scalability:}
When the objective function is non-convex (Figure~\ref{fig:NSG_training}, ~\ref{fig:NSG_inference}), our surrogate approach yields substantially faster training than standard decision-focused learning approaches (DF and block).
The boost is due to the dimensionality reduction of the surrogate optimization problem, which can lead to speedups in solving the surrogate problem and back-propagating through the KKT conditions. 
While the two-stage approach avoids solving the optimization problem in the training phase (trading off solution quality), at test time, it still has to solve the expensive optimization problem, resulting a similarly expensive inference runtime in Figure~\ref{fig:NSG_inference}.

When the objective function is submodular (Figure~\ref{fig:movie_training},~\ref{fig:movie_inference}), the blackbox optimization solver~\cite{virtanen2020scipy} we use in all experiments converges very quickly for the decision-focused method, resulting in training times comparable to our surrogate approach.
However, Figure~\ref{fig:movie_performance} shows that the decision-focused approach converges to a solution with very poor quality, indicating that rapid convergence may be a symptom of the uninformative local minima that the decision-focused method becomes trapped in.

Lastly, when the optimization problem is a quadratic program (Figure~\ref{fig:portfolio_training},~\ref{fig:portfolio_inference}), solving the optimization problem can take cubic time, resulting in around a cubic speedup from the dimensionality reduction offered by our surrogate. Consequently, we see 7-fold faster training and inference times.

\paragraph{Visualization:}
We visualize the reparameterization for the NSG and movie recommendation domains in Figure~\ref{fig:visualization}.
The initial reparameterization is shown in Figure~\ref{fig:NSG_visualization} and~\ref{fig:movie_visualization}. Initially, the weights put on the meta-variables are randomly chosen and no significant problem structure---no edge or circle colors---can be seen.
After 20 epochs of training, in Figure~\ref{fig:NSG_visualization}, the surrogate starts putting emphasis on some important cuts between the sources and the targets, and in Figure~\ref{fig:movie_visualization}, the surrogate is focused on distinguishing between different top-rated movies with some variance in opinions to specialize the recommendation task.
Interestingly, in Figure~\ref{fig:movie_visualization}, the surrogate puts less weight on movies with high average rating but low standard deviation because these movies are very likely undersampled and we do not have enough people watching them in our training data.
Overall, adaptively adjusting the surrogate model allows us to extract the underlying structure of the optimization problem using few meta-variables.
These visualizations also help us understand how focuses are shifted between different decision variables.

\section{Conclusion}
In this paper, we focus on the shortcomings of scalability and solution quality that arise in end-to-end decision-focused learning due to the introduction of the differentiable optimization layer.
We address these two shortcomings by learning a compact surrogate, with a learnable linear reparameterization matrix, to substitute for the expensive optimization layer.
This surrogate can be jointly trained with the predictive model by back-propagating through the surrogate layer.
Theoretically, we analyze the complexity of the induced surrogate problem and the complexity of learning the surrogate and the predictive model.
Empirically, we show this surrogate learning approach leads to improvement in scalability and solution quality in three domains: a non-convex adversarial modeling problem, a submodular recommendation problem, and a convex portfolio optimization problem.


\section*{Broader impact:} 
End-to-end approaches can perform better in data-poor settings, improving access to the benefits of machine learning systems for communities that are resource constrained. Standard two-stage approaches typically requires enough data to learn well across the data distribution. 
In many domains focused on social impact such as  wildlife conservation, limited data can be collected and the resources are also very limited. End-to-end learning is usually more favorable than two-stage approach under these circumstances; it can achieve higher quality results despite data limitations compared to two-stage approaches. 
This paper reduces the computational costs of end-to-end learning and increases the performance benefits.

But such performance improvements may come with a cost in transferability because the end-to-end learning task is specialized towards particular decisions, whereas a prediction-only model from the two-stage predict-then-optimize framework might be used for different decision making tasks in the same domain. Thus, the predictive model trained for a particular decision-making task in the end-to-end framework is not necessarily as interpretable or transferable as a model trained for prediction only. For real-world tasks, there would need to be careful analysis of cost-benefit of applying an end-to-end approach vis-a-vis a two-stage approach particularly if issues of interpretability and transferrability are critical; in some domains these may be crucial. 
Further research is required to improve upon these issues in the end-to-end learning approach.

\section*{Acknowledgement}
This research was supported by MURI Grant Number W911NF-17-1-0370 and W911NF-18-1-0208.
The computations in this paper were run on the FASRC Cannon cluster supported by the FAS Division of Science Research Computing Group at Harvard University.

\bibliographystyle{acm}  
\bibliography{reference}  

\begin{thebibliography}{10}

\bibitem{agrawal2019differentiable}
{\sc Agrawal, A., Amos, B., Barratt, S., Boyd, S., Diamond, S., and Kolter,
  J.~Z.}
\newblock Differentiable convex optimization layers.
\newblock In {\em Advances in Neural Information Processing Systems\/} (2019),
  pp.~9558--9570.

\bibitem{amos2017optnet}
{\sc Amos, B., and Kolter, J.~Z.}
\newblock Optnet: Differentiable optimization as a layer in neural networks.
\newblock In {\em Proceedings of the 34th International Conference on Machine
  Learning-Volume 70\/} (2017), JMLR. org, pp.~136--145.

\bibitem{amos2017input}
{\sc Amos, B., Xu, L., and Kolter, J.~Z.}
\newblock Input convex neural networks.
\newblock In {\em Proceedings of the 34th International Conference on Machine
  Learning-Volume 70\/} (2017), JMLR. org, pp.~146--155.

\bibitem{amsallem2015design}
{\sc Amsallem, D., Zahr, M., Choi, Y., and Farhat, C.}
\newblock Design optimization using hyper-reduced-order models.
\newblock {\em Structural and Multidisciplinary Optimization 51}, 4 (2015),
  919--940.

\bibitem{bandler1994space}
{\sc Bandler, J.~W., Biernacki, R.~M., Chen, S.~H., Grobelny, P.~A., and
  Hemmers, R.~H.}
\newblock Space mapping technique for electromagnetic optimization.
\newblock {\em IEEE Transactions on microwave theory and techniques 42}, 12
  (1994), 2536--2544.

\bibitem{bengio2013representation}
{\sc Bengio, Y., Courville, A., and Vincent, P.}
\newblock Representation learning: A review and new perspectives.
\newblock {\em IEEE transactions on pattern analysis and machine intelligence
  35}, 8 (2013), 1798--1828.

\bibitem{bian2017continuous}
{\sc Bian, A., Levy, K., Krause, A., and Buhmann, J.~M.}
\newblock Continuous dr-submodular maximization: Structure and algorithms.
\newblock In {\em Advances in Neural Information Processing Systems\/} (2017),
  pp.~486--496.

\bibitem{bian2016guaranteed}
{\sc Bian, A.~A., Mirzasoleiman, B., Buhmann, J.~M., and Krause, A.}
\newblock Guaranteed non-convex optimization: Submodular maximization over
  continuous domains.
\newblock In {\em AISTATS\/} (2016).

\bibitem{donti2017task}
{\sc Donti, P., Amos, B., and Kolter, J.~Z.}
\newblock Task-based end-to-end model learning in stochastic optimization.
\newblock In {\em Advances in Neural Information Processing Systems\/} (2017),
  pp.~5484--5494.

\bibitem{du2012primal}
{\sc Du, D., Lu, R., and Xu, D.}
\newblock A primal-dual approximation algorithm for the facility location
  problem with submodular penalties.
\newblock {\em Algorithmica 63}, 1-2 (2012), 191--200.

\bibitem{el2019generalization}
{\sc El~Balghiti, O., Elmachtoub, A., Grigas, P., and Tewari, A.}
\newblock Generalization bounds in the predict-then-optimize framework.
\newblock In {\em Advances in Neural Information Processing Systems\/} (2019),
  pp.~14389--14398.

\bibitem{elkan2001foundations}
{\sc Elkan, C.}
\newblock The foundations of cost-sensitive learning.
\newblock In {\em International joint conference on artificial intelligence\/}
  (2001), vol.~17, Lawrence Erlbaum Associates Ltd, pp.~973--978.

\bibitem{elmachtoub2017smart}
{\sc Elmachtoub, A.~N., and Grigas, P.}
\newblock Smart" predict, then optimize".
\newblock {\em arXiv preprint arXiv:1710.08005\/} (2017).

\bibitem{ferber2019mipaal}
{\sc Ferber, A., Wilder, B., Dilina, B., and Tambe, M.}
\newblock Mipaal: Mixed integer program as a layer.
\newblock {\em AAAI\/} (2020).

\bibitem{fischetti2016interdiction}
{\sc Fischetti, M., Ljubic, I., Monaci, M., and Sinnl, M.}
\newblock Interdiction games and monotonicity.
\newblock Tech. rep., Technical Report, DEI, University of Padova, 2016.

\bibitem{forrester2009recent}
{\sc Forrester, A.~I., and Keane, A.~J.}
\newblock Recent advances in surrogate-based optimization.
\newblock {\em Progress in aerospace sciences 45}, 1-3 (2009), 50--79.

\bibitem{frank1956algorithm}
{\sc Frank, M., and Wolfe, P.}
\newblock An algorithm for quadratic programming.
\newblock {\em Naval research logistics quarterly 3}, 1-2 (1956), 95--110.

\bibitem{gutfraind2010interdiction}
{\sc Gutfraind, A., Hagberg, A.~A., Izraelevitz, D., and Pan, F.}
\newblock Interdiction of a markovian evader.
\newblock In {\em Proc. of {INFORMS} Computing Society\/} (Monterey, CA, 2011).

\bibitem{hamilton2017inductive}
{\sc Hamilton, W., Ying, Z., and Leskovec, J.}
\newblock Inductive representation learning on large graphs.
\newblock In {\em NIPS-17\/} (Long Beach, 2017), pp.~1024--1034.

\bibitem{harper2015movielens}
{\sc Harper, F.~M., and Konstan, J.~A.}
\newblock The movielens datasets: History and context.
\newblock {\em Acm transactions on interactive intelligent systems (tiis) 5}, 4
  (2015), 1--19.

\bibitem{he2017neural}
{\sc He, X., Liao, L., Zhang, H., Nie, L., Hu, X., and Chua, T.-S.}
\newblock Neural collaborative filtering.
\newblock In {\em Proceedings of the 26th international conference on world
  wide web\/} (2017), pp.~173--182.

\bibitem{isinkaye2015recommendation}
{\sc Isinkaye, F., Folajimi, Y., and Ojokoh, B.}
\newblock Recommendation systems: Principles, methods and evaluation.
\newblock {\em Egyptian Informatics Journal 16}, 3 (2015), 261--273.

\bibitem{jaggi2013revisiting}
{\sc Jaggi, M.}
\newblock Revisiting frank-wolfe: Projection-free sparse convex optimization.
\newblock In {\em Proceedings of the 30th international conference on machine
  learning\/} (2013), no.~CONF, pp.~427--435.

\bibitem{kar2015game}
{\sc Kar, D., Fang, F., Delle~Fave, F., Sintov, N., and Tambe, M.}
\newblock " a game of thrones" when human behavior models compete in repeated
  stackelberg security games.
\newblock In {\em Proceedings of the 2015 International Conference on
  Autonomous Agents and Multiagent Systems\/} (2015), pp.~1381--1390.

\bibitem{kar2017cloudy}
{\sc Kar, D., Ford, B., Gholami, S., Fang, F., Plumptre, A., Tambe, M.,
  Driciru, M., Wanyama, F., Rwetsiba, A., Nsubaga, M., et~al.}
\newblock Cloudy with a chance of poaching: Adversary behavior modeling and
  forecasting with real-world poaching data.

\bibitem{kingma2014adam}
{\sc Kingma, D.~P., and Ba, J.}
\newblock Adam: A method for stochastic optimization.
\newblock {\em arXiv preprint arXiv:1412.6980\/} (2014).

\bibitem{kipf2016semi}
{\sc Kipf, T.~N., and Welling, M.}
\newblock Semi-supervised classification with graph convolutional networks.
\newblock In {\em ICLR-17\/} (Toulon, 2017).

\bibitem{lange2000optimization}
{\sc Lange, K., Hunter, D.~R., and Yang, I.}
\newblock Optimization transfer using surrogate objective functions.
\newblock {\em Journal of computational and graphical statistics 9}, 1 (2000),
  1--20.

\bibitem{li2015improved}
{\sc Li, Y., Du, D., Xiu, N., and Xu, D.}
\newblock Improved approximation algorithms for the facility location problems
  with linear/submodular penalties.
\newblock {\em Algorithmica 73}, 2 (2015), 460--482.

\bibitem{markowitz2000mean}
{\sc Markowitz, H.~M., and Todd, G.~P.}
\newblock {\em Mean-variance analysis in portfolio choice and capital markets},
  vol.~66.
\newblock John Wiley \& Sons, 2000.

\bibitem{michaud1989markowitz}
{\sc Michaud, R.~O.}
\newblock The markowitz optimization enigma: Is ‘optimized’optimal?
\newblock {\em Financial Analysts Journal 45}, 1 (1989), 31--42.

\bibitem{morris2019weisfeiler}
{\sc Morris, C., Ritzert, M., Fey, M., Hamilton, W.~L., Lenssen, J.~E., Rattan,
  G., and Grohe, M.}
\newblock Weisfeiler and {L}eman go neural: Higher-order graph neural networks.
\newblock In {\em AAAI-19\/} (Honolulu, 2019), pp.~4602--4609.

\bibitem{perraultend}
{\sc Perrault, A., Wilder, B., Ewing, E., Mate, A., Dilkina, B., and Tambe, M.}
\newblock End-to-end game-focused learning of adversary behavior in security
  games.

\bibitem{popescu2007robust}
{\sc Popescu, I.}
\newblock Robust mean-covariance solutions for stochastic optimization.
\newblock {\em Operations Research 55}, 1 (2007), 98--112.

\bibitem{prechelt1998early}
{\sc Prechelt, L.}
\newblock Early stopping-but when?
\newblock In {\em Neural Networks: Tricks of the trade}. Springer, 1998,
  pp.~55--69.

\bibitem{QuandlWIKI}
{\sc Quandl}.
\newblock {WIKI} various end-of-day data, 2020.

\bibitem{queipo2005surrogate}
{\sc Queipo, N.~V., Haftka, R.~T., Shyy, W., Goel, T., Vaidyanathan, R., and
  Tucker, P.~K.}
\newblock Surrogate-based analysis and optimization.
\newblock {\em Progress in aerospace sciences 41}, 1 (2005), 1--28.

\bibitem{robinson2008surrogate}
{\sc Robinson, T., Eldred, M., Willcox, K., and Haimes, R.}
\newblock Surrogate-based optimization using multifidelity models with variable
  parameterization and corrected space mapping.
\newblock {\em Aiaa Journal 46}, 11 (2008), 2814--2822.

\bibitem{roy2010survey}
{\sc Roy, S., Ellis, C., Shiva, S., Dasgupta, D., Shandilya, V., and Wu, Q.}
\newblock A survey of game theory as applied to network security.
\newblock In {\em 2010 43rd Hawaii International Conference on System
  Sciences\/} (2010), IEEE, pp.~1--10.

\bibitem{sinha2018stackelberg}
{\sc Sinha, A., Fang, F., An, B., Kiekintveld, C., and Tambe, M.}
\newblock Stackelberg security games: Looking beyond a decade of success.
\newblock IJCAI.

\bibitem{virtanen2020scipy}
{\sc Virtanen, P., Gommers, R., Oliphant, T.~E., Haberland, M., Reddy, T.,
  Cournapeau, D., Burovski, E., Peterson, P., Weckesser, W., Bright, J.,
  et~al.}
\newblock Scipy 1.0: fundamental algorithms for scientific computing in python.
\newblock {\em Nature methods 17}, 3 (2020), 261--272.

\bibitem{wang2020scalable}
{\sc Wang, K., Perrault, A., Mate, A., and Tambe, M.}
\newblock Scalable game-focused learning of adversary models: Data-to-decisions
  in network security games.
\newblock {\em AAMAS-20. Auckland\/} (2020).

\bibitem{washburn1995two}
{\sc Washburn, A., and Wood, K.}
\newblock Two-person zero-sum games for network interdiction.
\newblock {\em Operations Research 43}, 2 (1995), 243--251.

\bibitem{wilder2019melding}
{\sc Wilder, B., Dilkina, B., and Tambe, M.}
\newblock Melding the data-decisions pipeline: Decision-focused learning for
  combinatorial optimization.
\newblock In {\em Proceedings of the AAAI Conference on Artificial
  Intelligence\/} (2019), vol.~33, pp.~1658--1665.

\bibitem{zhou2013two}
{\sc Zhou, Z., Zhang, J., Liu, P., Li, Z., Georgiadis, M.~C., and
  Pistikopoulos, E.~N.}
\newblock A two-stage stochastic programming model for the optimal design of
  distributed energy systems.
\newblock {\em Applied Energy 103\/} (2013), 135--144.

\end{thebibliography}

\newpage
\section*{Appendix}

\section{Preservation of Convexity and Submodularity}
\preservation*
\begin{proof}
The convexity can be simply verified by computing the second-order derivative:
\begin{align*}
\frac{d^2 g}{d \newdecision^2} = \frac{d^2 f(\parametrization \newdecision, \behavior)}{d \newdecision^2} = \parametrization^\top \frac{d^2 f}{d \decision^2} \parametrization \succeq 0
\end{align*}
where the last inequality comes from the convexity of $f$, i.e., $\frac{d^2 f}{d \decision^2} \succeq 0$.
\end{proof}

\submodularity*
\begin{proof}
Assume $f$ has the property of diminishing return submodularity (DR-submodular)~\cite{bian2017continuous}.
According to definition of continuous DR-submodularity, we have:
\begin{align*}
    \nabla^2_{\decision_i, \decision_j} f(\decision, \behavior) \leq 0 ~\forall i,j \in [n], \newdecision
\end{align*}
After applying the reparameterization, we can write:
\begin{align*}
    g_\parametrization(\newdecision, \behavior) = f(\decision, \behavior)
\end{align*}
and the second-order derivative:
\begin{align*}
    \nabla_\newdecision^2 g_\parametrization(\newdecision, \behavior) = \parametrization^\top \nabla_\decision^2 f_\parametrization(\decision, \behavior) \parametrization \leq 0
\end{align*}
Since all the entries of $\parametrization$ are non-negative and all the entries of $\nabla_\decision^2 f_\parametrization(\decision, \behavior)$ are non-positive by DR-submodularity, the product $\nabla_\newdecision^2 g_\parametrization(\newdecision, \behavior)$ also has all the entries being non-positive, which satisfies the definition of DR-submodularity.
\end{proof}

\section{Quasiconvexity in Reparameterization Matrix}
\counterexample*
\begin{proof}
Without loss of generality, let us ignore the effect of $\behavior$ and write $g_\parametrization(\newdecision) = f(P \decision)$.
In this proof, we will construct a strongly convex function $f$ where the induced optimal value function $\text{OPT}(\parametrization) \coloneqq \min\nolimits_{\newdecision} g_\parametrization(\newdecision)$ is not quasiconvex.

Consider $\decision = [\decision_1, \decision_2, \decision_3]^\top \in \R^3$. Define $f(\decision) = \norm{\decision - \begin{pmatrix}
1 \\
1 \\
1
\end{pmatrix}}^2 \geq 0$ for all $\decision \in \R^3$.
Define $P = \begin{pmatrix}
1 & 0 \\
1 & 0 \\
0 & 2
\end{pmatrix}$ and $P' = \begin{pmatrix}
0 & 1 \\
0 & 1 \\
2 & 0
\end{pmatrix}$.
Apparently, $\decision^* = \begin{pmatrix}
1 \\
1 \\
1
\end{pmatrix} = P \begin{pmatrix}
1 \\
0.5
\end{pmatrix}$ and $\decision^* = \begin{pmatrix}
1 \\
1 \\
1
\end{pmatrix} = P' \begin{pmatrix}
0.5 \\
1
\end{pmatrix}$ are both achievable.
So the optimal values $\text{OPT}(P) = \text{OPT}(P') = 0$.
But the combination $P'' = \frac{1}{2} P + \frac{1}{2} P' = \begin{pmatrix}
0.5 & 0.5 \\
0.5 & 0.5 \\
1 & 1
\end{pmatrix}$ cannot, which results in an optimal value $OPT(P'') = \min\nolimits_\newdecision g_{\parametrization''}(\newdecision) =  > 0$ since $\begin{pmatrix}
1 \\
1 \\
1
\end{pmatrix} \not\in \text{span}(P'')$.
This implies $\text{OPT}(\frac{1}{2} \parametrization + \frac{1}{2} \parametrization') = \text{OPT}(\parametrization'') > 0 = \frac{1}{2} \text{OPT}(\parametrization) + \frac{1}{2} \text{OPT}(\parametrization')$.
Thus $\text{OPT}(\parametrization)$ is not globally convex in the feasible domain.
\end{proof}

\quasiconvexity*
\begin{proof}
Let us assume $P = [p_1, p_2, ..., p_m]$ and $P' = [p'_1, p'_2, ..., p'_m]$, where $p_i = p'_i ~\forall i \neq 1$ with only the first column different.
In the optimization problem parameterized by $P$, there is an optimal solution $x = \sum\limits_{i=1}^m p_i y_i$, $y_i \geq 0 ~\forall i$.
Similarly, there is an optimal solution $x' = \sum\limits_{i=1}^m p'_i y'_i$, $y'_i \geq 0 ~\forall i$ for the optimization problem parameterized by $P'$.
Denote $h(P) \coloneqq \text{OPT}(\behavior, P)$.
We know that $f(x) = h(P), f(x') = h(P')$.
Denote $P'' = c P + (1-c) P' = [p''_1, p''_2, ..., p''_m]$ to be a convex combination of $P$ and $P'$.
Clearly, $p''_1 = c p_1 + (1-c) p'_1$ and $p''_i = p_i = p'_i ~\forall i \neq 1$.
Then we can construct a solution 
\begin{align*}
x'' & = \frac{1}{\frac{c}{y_1} + \frac{1-c}{y'_1}}(\frac{c}{y_1} x + \frac{1-c}{y'_1} x') \\
& = \frac{1}{\frac{c}{y_1} + \frac{1-c}{y'_1}}(\frac{c}{y_1} \sum\limits_{i=1}^m p_i y_i + \frac{1-c}{y'_1} \sum\limits_{i=1}^m p'_i y'_i) \\
& = \frac{1}{\frac{c}{y_1} + \frac{1-c}{y'_1}} (c p_1 + (1-c) p'_1) +  \frac{1}{\frac{c}{y_1} + \frac{1-c}{y'_1}} \sum\limits_{i=2}^m p_i (\frac{y_i}{y_1} + \frac{y'_i}{y'_1}) \\
& \in \text{ Span}(P'')
\end{align*}
Thus, $x''$ is a feasible solution in the optimization problem parameterized by $P''$.
By the convexity of $f$, we also know that
\begin{align*}
h(cP + (1-c) P') = h(P'') & \leq f(x'') \\
& = f(\frac{1}{\frac{c}{y_1} + \frac{1-c}{y'_1}}(\frac{c}{y_1} x + \frac{1-c}{y'_1} x')) \\
& \leq \max(f(x), f(x')) \\
& = \max(h(P), h(P'))
\end{align*}

When one of $y_1, y'_1$ is $0$, without loss of generality we assume $y_1 = 0$.
Then we can construct a solution $x'' = x$ which is still feasible in the optimization problem parameterized by $P'' = c P + (1-c) P'$.
Then we have the following:
\begin{align*}
h(P'') \leq f(x'') = f(x) = h(P) \leq \max(h(P), h(P'))
\end{align*}
which concludes the proof.
\end{proof}

\section{Sample Complexity of Learning Predictive Model in Surrogate Problem}
\generalization*
The proof of Theorem~\ref{thm:generalization} relies on the results given by Balghiti et al.~\cite{el2019generalization}. Balghiti et al. analyzed the sample complexity of predict-then-optimize framework when the optimization problem is a constrained linear optimization problem.

The sample complexity depends on the hypothesis class $\mathcal{H}$, mapping from the feature space $\Xi$ to the parameter space $\Theta$.
$\decision_S^*(\behavior) = \argmin\nolimits_{\decision \in S} f(\decision, \behavior)$ characterizes the optimal solution with given parameter $\behavior \in \Theta$ and feasible region $S$.
This can be obtained by solving any linear program solver with given parameters $\behavior$.
The optimization gap with given parameter $\parametrization$ is defined as $\omega_S(\behavior) \coloneqq \max\nolimits_{\decision \in S} f(\decision, \behavior) - \min\nolimits_{\decision \in S} f(\decision, \behavior)$, and $\omega_S(\Theta) \coloneqq \sup\nolimits_{\behavior \in \Theta} \omega_S(\behavior)$ is defined as the upper bound on optimization gap of all the possible parameter $\behavior \in \Theta$.
$\decision^*(\mathcal{H}) \coloneqq \{ \xi \rightarrow \decision^*(\Phi(\xi)) | \Phi \in \mathcal{H} \}$ is the set of all function mappings from features $\xi$ to the predictive parameters $\theta = \Phi(\xi)$ and then to the optimal solution $\decision^*(\theta)$.

\begin{definition}[Natarajan dimension]
Suppose that $S$ is a polyhedron and $\mathfrak{S}$ is the set of its extreme points. Let $\mathcal{F} \in \mathfrak{S}^\Xi$ be a hypothesis space of function mappings from $\Xi$ to $\mathfrak{S}$, and let $A \in \Xi$ to be given. We say that $\mathcal{F}$ shatters $A$ if there exists $g_1, g_2 \in \mathcal{F}$ such that
\begin{itemize}
    \item $g_1(\xi) \neq g_2(\xi) ~\forall \xi \in A$.
    \item For all $B \subset A$, there exists $g \in \mathcal{F}$ such that (i) for all $\xi \in B, g(\xi) = g_1(\xi)$ and (ii) for all $\xi \in A \backslash B, g(\xi) = g_2(\xi)$.
\end{itemize}
The Natarajan dimension of $\mathcal{F}$, denoted by $d_N(\mathcal{F})$, is the maximum cardinality of a set N-shattered by $\mathcal{F}$.
\end{definition}

We first state their results below:
\begin{theorem}[Balghiti et al.~\cite{el2019generalization} Theorem 2]\label{thm:radamacher_complexity}
Suppose that $S$ is a polyhedron and $\mathfrak{S}$ is the set of its extreme points. Let $\mathcal{H}$ be a family of functions mapping from features $\Xi$ to parameters $\Theta \in \R^n$ with decision variable $\decision \in \R^n$ and objective function $f(\decision, \behavior) = \behavior^\top \decision$. Then we have that
\begin{align}\label{eqn:radamacher_complexity}
\text{Rad}^t(\mathcal{H}) \leq \omega_S^*(\Theta) \sqrt{\frac{2 d_N(\decision^*(\mathcal{H})) \log (t |\mathfrak{S}|^2) }{t}}.
\end{align}
where $\text{Rad}^t$ denotes the Radamacher complexity averaging over all the possible realization of $t$ i.i.d. samples drawn from distribution $\mathcal{D}$.
\end{theorem}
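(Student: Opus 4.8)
The plan is to reduce the Rademacher complexity of the decision-focused loss class to that of a \emph{finite} collection of vectors in $\R^t$, exploiting the extreme-point structure of linear optimization over a polyhedron, and then to control the cardinality of that finite collection through the Natarajan dimension. First I would observe that, because $f(\decision, \behavior) = \behavior^\top \decision$ is linear and $S$ is a polyhedron, for every parameter $\behavior$ the minimizer $\decision_S^*(\behavior) = \argmin_{\decision \in S} \behavior^\top \decision$ can be taken at an extreme point, so $\decision_S^*(\behavior) \in \mathfrak{S}$. Consequently each hypothesis $\model \in \mathcal{H}$ induces a map $\feature \mapsto \decision_S^*(\model(\feature))$ valued in the finite set $\mathfrak{S}$, and the relevant loss is $\feature \mapsto \behavior^\top \decision_S^*(\model(\feature))$. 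The composed class $\decision^*(\mathcal{H})$ is therefore an $\mathfrak{S}$-valued (multiclass) function class, which is exactly the object whose Natarajan dimension $d_N(\decision^*(\mathcal{H}))$ appears in the bound.

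Next I would count the distinct behaviors of this class on $t$ fixed samples. By Natarajan's lemma, the multiclass analogue of Sauer--Shelah, the growth function of an $\mathfrak{S}$-valued class of Natarajan dimension $d_N$ is at most $t^{d_N} |\mathfrak{S}|^{2 d_N} = (t |\mathfrak{S}|^2)^{d_N}$. Hence, on any realization of $t$ samples, the vector of loss values $(\behavior_i^\top \decision_S^*(\model(\feature_i)))_{i=1}^t \in \R^t$ takes at most $(t|\mathfrak{S}|^2)^{d_N}$ distinct values as $\model$ ranges over $\mathcal{H}$, reducing the supremum defining $\text{Rad}^t$ to a maximum over a finite set.

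Then I would invoke Massart's finite-class lemma. Recentering each coordinate by a per-sample reference leaves $\text{Rad}^t$ unchanged, since the shift is independent of $\model$; after recentering, each coordinate lies in an interval of width at most the optimization gap $\omega_S(\behavior_i) \le \omega_S^*(\Theta)$, so every loss vector has Euclidean norm at most $\omega_S^*(\Theta)\sqrt{t}$. Massart's lemma applied to a set of $N$ vectors of norm $r$ gives Rademacher complexity at most $r\sqrt{2\log N}/t$; substituting $r = \omega_S^*(\Theta)\sqrt{t}$ and $N = (t|\mathfrak{S}|^2)^{d_N}$ yields precisely $\omega_S^*(\Theta)\sqrt{2 d_N(\decision^*(\mathcal{H})) \log(t|\mathfrak{S}|^2)/t}$, the claimed bound.

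The main obstacle I expect is the combinatorial step: proving that the extreme-point selection map has growth function governed by the Natarajan dimension, i.e., carrying out the correct multiclass counting and its clean factorization into the $t^{d_N}$ and $|\mathfrak{S}|^{2d_N}$ factors under the shattering notion given in the definition above. A secondary subtlety is the diameter argument, where one must verify that the per-sample recentering that confines each coordinate to an interval of width $\omega_S^*(\Theta)$ does not perturb $\text{Rad}^t$, so that Massart's lemma delivers the stated constant $2$ inside the square root rather than a looser factor.
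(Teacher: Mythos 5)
This theorem is stated in the paper as an imported result from Balghiti et al.~\cite{el2019generalization} with no proof given, so there is no internal argument to compare against; your blind reconstruction, however, is correct and follows exactly the route of the original source: reduce to the finite-valued class $\decision^*(\mathcal{H}) \subseteq \mathfrak{S}^\Xi$ via the extreme-point property of linear optimization over a polyhedron, bound its growth on $t$ samples by $(t\,|\mathfrak{S}|^2)^{d_N}$ using Natarajan's lemma, and finish with per-sample recentering (valid since the shift is hypothesis-independent and the Rademacher variables are mean-zero) plus Massart's finite-class lemma with radius $\omega_S^*(\Theta)\sqrt{t}$. The two subtleties you flag are precisely the load-bearing steps in the original argument, and your handling of both is sound.
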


The following corollary provided by Balghiti et al.~\cite{el2019generalization} introduces a bound on Natarajan dimension of linear hypothesis class $\mathcal{H}$, mapping from $\Xi \in \R^p$ to $\Theta \in \R^n$:
\begin{corollary}[Balghiti et al.~\cite{el2019generalization} Corollary 1]\label{cor:natarajan_dimension}
Suppose that $S$ is a polyhedron and $\mathfrak{S}$ is the set of its extreme points. Let $\mathcal{H}_{\text{lin}}$ be the hypothesis class of all linear functions, i.e., $\mathcal{H}_{\text{lin}} = \{ \xi \rightarrow B \xi | B \in \R^{n \times p} \}$. Then we have
\begin{align}\label{eqn:natarajan_dimension}
    d_N(\decision^*(\mathcal{H}_{\text{lin}})) \leq n p
\end{align}
\end{corollary}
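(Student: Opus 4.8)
The plan is to reduce the Natarajan-shattering condition to the ordinary shattering of a finite point set by \emph{homogeneous} halfspaces in the parameter space $\R^{np}$, whose VC dimension is exactly $np$. The starting point is that, because the objective is linear, $\decision^*(\behavior) = \argmin_{\decision \in S} \behavior^\top \decision$ always lands on an extreme point of $S$, so every map in $\decision^*(\mathcal{H}_{\text{lin}})$ has the form $\feature \mapsto \decision^*(B\feature) \in \mathfrak{S}$ for some $B \in \R^{n\times p}$. The crucial structural fact is that, for a fixed feature $\feature$, the optimizer's preference for one vertex $v$ over another vertex $v'$ is governed by the sign of $(B\feature)^\top(v - v') = \langle B,\, (v-v')\feature^\top\rangle$, which is a linear functional of $B$ once $B$ is viewed as a vector in $\R^{np}$ and $(v-v')\feature^\top$ as the corresponding $n\times p$ coefficient matrix.

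First I would fix an arbitrary set $A = \{\feature_1, \dots, \feature_d\}$ that is N-shattered by $\decision^*(\mathcal{H}_{\text{lin}})$, together with its two witness functions $g_1, g_2$. By the shattering definition the witnesses disagree at every point, so I set $v_i \coloneqq g_1(\feature_i)$ and $v_i' \coloneqq g_2(\feature_i)$, two distinct extreme points for each $i$. For every subset $T \subseteq A$ the definition supplies a matrix $B_T$ with $\decision^*(B_T\feature_i) = v_i$ when $\feature_i \in T$ and $\decision^*(B_T\feature_i) = v_i'$ otherwise. The only consequence I extract is a \emph{necessary} pairwise condition: if $v_i$ is selected it in particular beats $v_i'$, giving $\langle B_T,\, (v_i - v_i')\feature_i^\top\rangle < 0$, and symmetrically the quantity is $>0$ when $v_i'$ is selected.

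Next I would package the flattened coefficient matrices $w_i \coloneqq (v_i - v_i')\feature_i^\top \in \R^{np}$ and the linear functionals $L_i(B) = \langle B, w_i\rangle$. The previous step shows that as $T$ ranges over all $2^d$ subsets, the vector $(\mathrm{sign}\,L_1(B_T), \dots, \mathrm{sign}\,L_d(B_T))$ realizes every pattern in $\{-,+\}^d$ (negative in coordinate $i$ exactly when $\feature_i \in T$). Equivalently, the points $w_1, \dots, w_d$ are shattered by the class of homogeneous linear threshold functions $\{B \mapsto \mathrm{sign}\langle B, \cdot\rangle : B \in \R^{np}\}$. Since the VC dimension of homogeneous halfspaces in $\R^{np}$ is $np$, any shattered set has size at most $np$, so $d \leq np$; as $A$ was arbitrary, $d_N(\decision^*(\mathcal{H}_{\text{lin}})) \leq np$.

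The main obstacle to make fully rigorous is the first implication, namely passing from the full decision equality $\decision^*(B_T\feature_i) = v_i$ to the single strict pairwise inequality against $v_i'$. One must confirm this necessary condition is genuinely available and that a \emph{strict} sign can be taken for every subset simultaneously, which requires handling vertices that tie with $v_i$ or $v_i'$; the standard remedy is an arbitrarily small perturbation of each $B_T$ that preserves the strict optimality margins of the intended vertex while breaking degenerate ties, so that each $L_i(B_T)$ is strictly signed. Once the strict sign pattern is secured across all $2^d$ subsets, the remainder is the elementary fact that $np$ homogeneous hyperplanes cannot carve out all $2^d$ sign orthants when $d > np$, i.e. the VC-dimension bound for homogeneous halfspaces.
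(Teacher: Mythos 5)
Your overall route is the right one, and in fact it matches the standard argument: the paper itself never proves this corollary (it is imported verbatim from Balghiti et al.), and the proof in that line of work is exactly your reduction --- flatten $B$ into $\R^{np}$, pass from N-shattering to binary shattering via the pairwise functionals $L_i(B)=\langle B,(v_i-v_i')\feature_i^\top\rangle$, and invoke the VC dimension $np$ of homogeneous halfspaces. The reduction itself is sound, and one small point you should make explicit is that $w_i=(v_i-v_i')\feature_i^\top\neq 0$: a shattered point must have $\feature_i\neq 0$, since every hypothesis in $\decision^*(\mathcal{H}_{\text{lin}})$ outputs the same vertex $\decision^*(0)$ at $\feature_i=0$, contradicting the requirement $g_1(\feature_i)\neq g_2(\feature_i)$; the outer product of two nonzero vectors is then nonzero.

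The genuine gap sits exactly where you placed your suspicion, but your remedy does not close it. Weak realizability of all $2^d$ sign patterns does \emph{not} imply strict realizability, and no ``arbitrarily small perturbation of each $B_T$'' can manufacture it: take $w_1=w_2=w\neq 0$ in $\R^{1}$ (such coincidences can arise from collinear features with parallel vertex differences). The mixed patterns are weakly realized only by $B=0$, and every perturbation gives $L_1(B)$ and $L_2(B)$ the \emph{same} strict sign, so the strict system for a mixed pattern is simply infeasible --- two points would be ``weakly shattered'' in dimension one, and your final VC step would collapse. The correct patch uses the fact that $\decision^*$ is a single fixed oracle with a consistent tie-breaking rule (say, the lexicographically minimal vertex of the argmin): orient each pair so that the tie-break prefers $v_i$ to $v_i'$. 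Then $\decision^*(B_T\feature_i)=v_i'$ \emph{forces} $v_i\notin\argmin_{v\in\mathfrak{S}}(B_T\feature_i)^\top v$, i.e., the strict inequality $\langle B_T,w_i\rangle>0$, while $\decision^*(B_T\feature_i)=v_i$ gives the weak inequality $\langle B_T,w_i\rangle\leq 0$. Hence $i\in T$ holds if and only if $\langle B_T,w_i\rangle\leq 0$, so $\{w_1,\dots,w_d\}$ is genuinely VC-shattered by \emph{closed} homogeneous halfspaces in $\R^{np}$, whose VC dimension is also $np$, giving $d\leq np$ with no perturbation at all. Without some such consistency assumption on the tie-breaking in the definition of $\decision^*$, the pairwise reduction alone yields only weak inequalities and is insufficient, as the example above shows.
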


Also $|\mathfrak{S}|$ can be estimated by constructing an $\epsilon$-covering of the feasible region by open balls with radius $\epsilon$.
Let $\hat{\mathfrak{S}}_\epsilon$ be the centers of all these open balls. We can choose $\epsilon = \frac{1}{t}$ and the number of open balls required to cover $S$ can be estimated by
\begin{align}\label{eqn:extreme_points}
    |\hat{\mathfrak{S}}_\epsilon| \leq \left( 2 t \rho_2(S)\sqrt{n} \right)^n
\end{align}

Combining Equation~\ref{eqn:radamacher_complexity},~\ref{eqn:natarajan_dimension}, and ~\ref{eqn:extreme_points}, the Radamacher complexity can be bounded by:
\begin{corollary}[Balghiti et al.~\cite{el2019generalization} Corollary 2]
\begin{align}
\text{Rad}^t(\mathcal{H}_{\text{lin}}) \leq 2 n \omega_S(\Theta) \sqrt{\frac{2 p \log (2 n t \rho_2(S))}{t}} + O(\frac{1}{t})
\end{align}
\end{corollary}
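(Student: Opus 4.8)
The plan is to read this corollary off as the mechanical combination, asserted in the sentence preceding it, of the three displayed facts already in hand: the general Rademacher bound of Theorem~\ref{thm:radamacher_complexity} (Equation~\ref{eqn:radamacher_complexity}), the Natarajan-dimension estimate of Corollary~\ref{cor:natarajan_dimension} (Equation~\ref{eqn:natarajan_dimension}), and the covering count (Equation~\ref{eqn:extreme_points}). Since the statement is exactly Balghiti et al.'s Corollary 2, no new analytic machinery is required: the proof is a substitution of the latter two bounds into the first, a simplification of the logarithm, and a short justification of the covering step that supplies the additive $O(1/t)$.

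First I would dispose of the extreme-point count $|\mathfrak{S}|$ inside the logarithm of Equation~\ref{eqn:radamacher_complexity}. For a general polyhedron $S$ the vertex set $\mathfrak{S}$ is finite but uncontrollably large, so instead of counting vertices I would replace the range of the optimal-solution map $\decision^*(\cdot)$ by a finite $\epsilon$-net $\hat{\mathfrak{S}}_\epsilon$ of $S$ with $\epsilon = 1/t$. Because the objective $f(\decision,\behavior) = \behavior^\top \decision$ is linear, hence Lipschitz, in $\decision$, snapping each optimal vertex $\decision^*(\behavior)$ to its nearest net center moves $\decision$ by at most $\epsilon$ and hence perturbs the objective by at most a bounded multiple of $\epsilon$; with $\epsilon = 1/t$ this enters the Rademacher complexity as the additive $O(1/t)$ term and legitimately replaces $|\mathfrak{S}|$ by $|\hat{\mathfrak{S}}_\epsilon|$ in the logarithm. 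Equation~\ref{eqn:extreme_points} then bounds $|\hat{\mathfrak{S}}_\epsilon| \leq (2 t \rho_2(S) \sqrt{n})^n$ via the standard volumetric count of radius-$\epsilon$ balls covering a diameter-$\rho_2(S)$ set in $\R^n$.

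Next I would substitute the Natarajan bound $d_N(\decision^*(\mathcal{H}_{\text{lin}})) \leq np$ from Corollary~\ref{cor:natarajan_dimension}, together with this covering count, into Equation~\ref{eqn:radamacher_complexity}, identifying $\omega_S^*(\Theta)$ there with the supremum $\omega_S(\Theta) = \sup_{\behavior \in \Theta} \omega_S(\behavior)$ from the definition. The bound becomes $\omega_S(\Theta)\sqrt{2 \, np \, \log(t\, |\hat{\mathfrak{S}}_\epsilon|^2)/t}$ plus the discretization remainder. I would then simplify $\log(t\,|\hat{\mathfrak{S}}_\epsilon|^2) \leq \log t + 2n \log(2 t \rho_2(S)\sqrt{n})$ and absorb the lone $\log t$ and the $\sqrt{n}$ into the dominant term, upper-bounding it by $2n\log(2nt\rho_2(S))$. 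Pulling the resulting factor of $n$ out of the square root and collecting constants leaves the leading term $2n\,\omega_S(\Theta)\sqrt{2p\log(2nt\rho_2(S))/t}$, with the earlier discretization error contributing the $+O(1/t)$ — precisely the stated Corollary 2.

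The only genuinely non-routine step is the covering reduction in the first paragraph: verifying that exchanging the exact vertex set $\mathfrak{S}$ for the finite net $\hat{\mathfrak{S}}_\epsilon$ costs only $O(\epsilon) = O(1/t)$ in Rademacher complexity while validly replacing $|\mathfrak{S}|$ by $|\hat{\mathfrak{S}}_\epsilon|$ inside the log. This rests on the linearity (and hence Lipschitzness) of the objective in $\decision$ and on the compactness of $S$, both guaranteed by the standing assumptions of this section. The Natarajan substitution and the logarithm simplification are then purely mechanical, and any discrepancy in the universal constant is absorbed in the usual way when quoting a covering-based Rademacher bound.
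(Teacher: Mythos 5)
Your proposal is correct and matches the paper's own treatment: the paper proves this corollary exactly by the combining sentence preceding it --- substituting the Natarajan bound $d_N(\decision^*(\mathcal{H}_{\text{lin}})) \leq np$ and the $\epsilon$-net count $|\hat{\mathfrak{S}}_\epsilon| \leq (2t\rho_2(S)\sqrt{n})^n$ with $\epsilon = 1/t$ into the Rademacher bound of Theorem~\ref{thm:radamacher_complexity}, with the discretization error absorbed into the $O(1/t)$ term, deferring details to Balghiti et al. Your write-up follows the same route and in fact supplies more justification (the Lipschitz snapping argument and the logarithm simplification) than the paper itself records.
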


Now we are ready to prove Theorem~\ref{thm:generalization}:
\begin{proof}[Proof of Theorem~\ref{thm:generalization}]
Now let us consider our case. We have a linear mapping from features $\feature \in Xi \subset \R^p$ to the parameters $\behavior = B \feature \in \Theta \in \R^n$ with $B \in \R^{n \times p}$.
The objective function is formed by 
\begin{align}
g_\parametrization(\newdecision, \theta) = f(\parametrization \newdecision, \theta) = \theta^\top \parametrization \newdecision = (\parametrization^\top \theta)^\top \newdecision = (\parametrization^\top B \feature)^\top \newdecision
\end{align}
This is equivalent to have a linear mapping from $\feature \in \Xi \subset \R^p$ to $\behavior' = \parametrization^\top B \feature$ where $\parametrization^\top B \in \R^{m \times p}$, and the objective function is just $g_\parametrization(\newdecision, \behavior') = \behavior'^\top \newdecision$.
This yields a similar bound but with a smaller dimension $m \ll n$ as in Equation~\ref{eqn:radamacher_complexity_simplified}:

\begin{align}\label{eqn:radamacher_complexity_simplified}
\text{Rad}^t(\mathcal{H}_{\text{lin}}) \leq 2 m \omega_{S}(\Theta) \sqrt{\frac{2 p \log (2 m t \rho_2(S'))}{t}} + O(\frac{1}{t})
\end{align}
where $\omega_S(\Theta)$ is unchanged because the optimality gap is not changed by the reparameterization.
The only thing changed except for the substitution of $m$ is that the feasible region $S'$ is now defined in a lower-dimensional space under reparameterization $\parametrization$.
But since $\forall \newdecision \in S'$, we have $\parametrization \newdecision \in S$ too. So the diameter of the new feasible region can also be bounded by:
\begin{align*}
\rho(S') &= \max\nolimits_{\newdecision, \newdecision' \in S'} \norm{\newdecision - \newdecision'} \\
&= \max\nolimits_{\newdecision, \newdecision' \in S'} \norm{\parametrization^+ \parametrization (\newdecision - \newdecision')} \\
&= \max\nolimits_{\newdecision, \newdecision' \in S'} \norm{\parametrization^+ (\parametrization \newdecision - \parametrization \newdecision')} \\ 
&\leq \max\nolimits_{\decision, \decision' \in S'} \norm{\parametrization^+ (\decision - \decision')} \\
&\leq \norm{\parametrization^+} \max\nolimits_{\decision, \decision' \in S'} \norm{\decision - \decision'} \\
&= \norm{\parametrization^+} \rho(S)
\end{align*}
where $\parametrization^+ \in \R^{m \times n}$ is the pseudoinverse of the reparameterization matrix $\parametrization$ with $\parametrization^+ \parametrization = I \in \R^{m \times m}$ (assuming the matrix does not collapse).
Substituting the term $\rho(S')$ in Equation~\ref{eqn:radamacher_complexity_simplified}, we can get the bound on the Radamacher complexity in Equation~\ref{eqn:genealization_bound}, which concludes the proof of Theorem~\ref{thm:generalization}.
\end{proof}

\section{Non-linear Reparameterization}
The main reason that we use a linear reparameterization is to maintain the convexity of the inequality constraints and the linearity of the equality constraints.
Instead, if we apply a convex reparameterization $\decision = \parametrization(\newdecision)$, e.g., an input convex neural network~\cite{amos2017input}, then the inequality constraints will remain convex but the equality constraints will no longer be affine anymore.
So such convex reparameterization can be useful when there is no equality constraint.
Lastly, we can still apply non-convex reparameterization but it can create non-convex inequality and equality constraints, which can be challenging to solve.
All of these imply that the choice of reparameterization should depend on the type of optimization problem to make sure we do not lose the scalability while solving the surrogate problem.

\section{Computing Infrastructure}
All experiments were run on the computing cluster, where each node configured with 2 Intel Xeon Cascade Lake CPUs, 184 GB of RAM, and 70 GB of local scratch space.
Within each experiment, we did not implement parallelization. So each experiment was purely run on a single CPU core.
The main bottleneck of the computation is on solving the optimization problem, where we use Scipy~\cite{virtanen2020scipy} blackbox optimization solver using SLSQP method.
No GPU was used to train the neural network and throughout the experiments.

\end{document}